\newtheorem{theorem}{Theorem}
\newtheorem{lemma}{Lemma}
\newtheorem{definition}{Definition}
\newtheorem{corollary}{Corollary}
\newcolumntype{P}[1]{>{\centering\arraybackslash}p{#1}}
\newcolumntype{M}[1]{>{\centering\arraybackslash}m{#1}}
\newcommand\numberthis{\addtocounter{equation}{1}\tag{\theequation}}
\def\ie{i.e. }
\newcommand{\tref}[1]{Table~\ref{#1}}
\newcommand{\Tref}[1]{Table~\ref{#1}}
\newcommand{\eref}[1]{Eq.~(\ref{#1})}
\newcommand{\Eref}[1]{Equation~(\ref{#1})}
\newcommand{\fref}[1]{Fig.~\ref{#1}}
\newcommand{\Fref}[1]{Figure~\ref{#1}}
\newcommand{\sref}[1]{Sec.~\ref{#1}}
\newcommand{\Ltwo}{\ell_2}
\renewcommand{\tabcolsep}{4pt}
\renewcommand{\arraystretch}{1.5}
\DeclareMathOperator*{\argminA}{arg\,min} 
\newcommand{\var}{\mathrm{var}}
\newcommand{\E}{\mathbb{E}}
\newcommand{\bZ}{\mathbf{Z}}
\newcommand{\bX}{\mathbf{X}}
\newcommand{\bY}{\mathbf{Y}}
\newcommand{\bF}{\mathbf{F}}
\newcommand{\bM}{\mathbf{M}}
\newcommand{\bA}{\mathbf{A}}
\newcommand{\bB}{\mathbf{B}}
\newcommand{\ba}{\mathbf{a}}
\newcommand{\bc}{\mathbf{c}}
\newcommand{\btheta}{\boldsymbol{\theta}}
\newcommand{\bTheta}{\boldsymbol{\Theta}}
\newcommand{\bmu}{\boldsymbol{\mu}}
\newcommand{\bz}{\mathbf{z}}
\newcommand{\bx}{\mathbf{x}}
\newcommand{\by}{\mathbf{y}}
\newcommand{\bm}{\mathbf{m}}
\newcommand{\as}{\stackrel{a.s.}{=}}
\newcommand{\al}{\stackrel{a.s.}{<}}
\newcommand{\ag}{\stackrel{a.s.}{>}}
\newcommand{\dif}{d^2}
\ifcvprfinal\pagestyle{empty}\fi
\begin{document}

\title{  Hierarchical Models:\\ Intrinsic Separability in High Dimensions}

\author{Wen-Yan Lin\\
{\tt\small daniellin@smu.edu.sg}
}

\maketitle

\begin{abstract}


It has long been noticed that high dimension data exhibits strange patterns. This has  been variously interpreted as either a ``blessing''
or a   ``curse'', 
causing uncomfortable inconsistencies in the literature. We propose that these patterns
 arise from  an intrinsically hierarchical generative process. 
Modeling the process creates a web of constraints that 
reconcile many  different theories and results.
The model also implies   high dimensional data posses an innate separability
that can be exploited for machine learning. 
We demonstrate  how this permits the  open-set learning problem to be 
defined  mathematically, leading to  qualitative and quantitative improvements in performance. 
 \end{abstract}

\section{Introduction}

Despite the wide-spread use of  high dimensional data in computer vision and pattern recognition,  understanding of high dimensions has proven elusive.
In the dominant  mathematical paradigm,
 high dimensions are
 ``cursed''. The
 ``curse''  is rooted in a stochastic model where all
instances are generated from some distribution-of-everything.
This  causes almost all
instances to be equi-distant to any test point, making them
indistinguishable from each other and any machine learning
impossible~\cite{wiki,aggarwal2001surprising,domingos2012few}.

Yet, effective high dimensional machine learning does exist. 
Further, it utilizes the same equations  used to derive the  ``curse''.
However, 
by assuming multiple generative distributions~\cite{radovanovic2015reverse,lin2018dimensionality,tomasev2014role}, the  ``curse''  
turns into a ``blessing'' that  disentangles distribution instances. 
Both ``curse'' and ``blessing'' papers  appear mathematically sound but
 their conclusions are contradictory and   their assumptions are not  mutually exclusive. This creates a puzzle.

We believe  ``curse'' and ``blessing'' both stem from  patterns arising from an intrinsically hierarchical  data generation process. This would permit both their assumptions
to be simultaneously true and may also 
 explain why hierarchies  are 
a recurring theme in   language~\cite{simpson1961principles,stuessy2009plant},
 data-structures~\cite{windley1960trees}  and clustering algorithms~\cite{szekely2005hierarchical}. 
To understand this phenomenon, we develop a high dimensional \emph{hierarchical-model}.
The model  unifies  ``curse'' and ``blessing''
into  an elegant web of constraints that links all  generative distributions to each another and ultimately to a distribution-of-everything, forming  an over-arching framework that reconciles
many  theories and results.

%

\begin{figure}
	\centering
	\includegraphics[width = 1.0\linewidth]{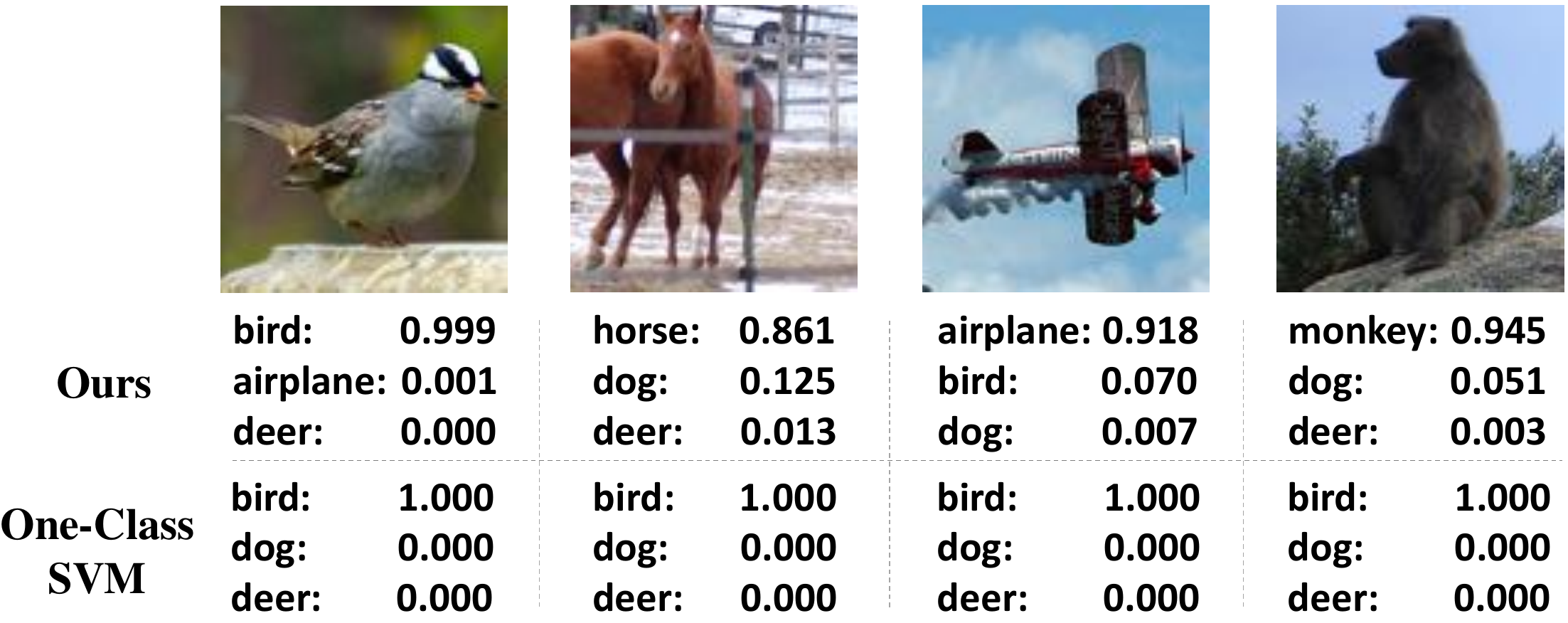}
	\caption{ 
Our formulation permits multiple one-class learners
to be fused into a multiclass classifier without re-training.
This is not possible with traditional one-class frameworks~\cite{kardan2016fitted}
as demonstrated with one-class SVM~\cite{chen2001one}, which
assigns all images to the bird class. 	
		 \label{fig:intro}}
\end{figure}

Beyond explanatory power,   the \emph{hierarchical-model} also predicts almost-all instances of
a  distribution and its sub-distributions, lie on a distinctive-shell that other instances almost-never enter. 
Such distinctive-shells  can be estimated from  instances of the distribution. 
This relates current observations to all other potential observations, creating a  mathematical formulation for 
  open-set learning!

Open-set learning is one  of the most difficult   machine learning problems. In it,  a classifier trained on a small number of labeled classes must
deal effectively
with a much larger set of unlabeled classes~\cite{oza2019c2ae,yoshihashi2019classification,bendale2016towards}. 
Open-set learning is critical 
to handling  unforeseen circumstances   gracefully.
However, it is relatively unstudied because,  before the \emph{hierarchical-model},  
it appeared difficult (if not impossible) to  make  reasonable a priori assumptions on the properties of all possible unseen classes.  

We formulate the classic open-set problem of one-class learning in terms of the \emph{hierarchical-model}. This creates a shell-learner that is algorithmically similar to 
one-class SVM~\cite{chen2001one}. However, unlike traditional one-class learning, it 
 can approximate the absolute  probability of a class given an image.

%
%

Such absolute scores are essential to operating in
unknown environments. 
It also means   scores of independently trained 
shell-learners can  be combined coherently  without  re-training.
This is especially important in 
life-long learning~\cite{kardan2016fitted} where constant  re-training can cause
potentially catastrophic forgetting and interference problems~\cite{parisi2019continual}.
An example 
 is illustrated in \fref{fig:intro}.

In summary, this paper:
\begin{itemize}[topsep=2pt]
\setlength{\parskip}{2pt}
\setlength{\itemsep}{0pt}
\item Proposes a \emph{hierarchical-model} that unifies ``curse'' and ``blessing'' approaches in  high dimensions;
\item Uses the  \emph{hierarchical-model} to predict the existence of distinctive-shells that make open-set problems amenable to mathematical analysis;
\item Demonstrates a one-class learner which estimates
the probability of a class given an image. Previously one-class learners only provided relative scores.  
\end{itemize}

\subsection{Related Works}

This paper lies at the intersection of many research fields. 
Its formulation follows  the long tradition of  statistical
machine learning. Examples include Expectation Miaxmization~\cite{dempster1977maximum,moon1996expectation,dellaert2002expectation}, 
 Latent Dirichlet Allocation (LDA)~\cite{blei2002latent,russell2006using, sivic2008unsupervised},  and Bayesian inference~\cite{box1965multiparameter,gelman2013bayesian,friedman1997bayesian}.  However, data density vanishes as dimensions increase, making probability density estimation ill-conditioned~\cite{scott2008curse}. Thus,  classic techniques often cannot scale to high dimensions.  The \emph{hierarchical-model}  avoids this problem by focusing on distance relations rather than  density.

While high dimensions are often  mathematically intractable,
computer vision has successfully utilized them in    deep-learning. Examples include  deep-learned image features~\cite{arandjelovic2016netvlad,he2016deep, simonyan2014very},
  Generative Adversarial Networks~\cite{dosovitskiy2016generating,goodfellow2014generative} and Variational Autoencoders~\cite{kingma2014auto}. Results are undoubtedly good but
   interpretation has proven  difficult. We  
  hope   \emph{hierarchical-models} make high dimensions
  and  (indirectly) deep-learners more  interpretable.  
  
Finally, the \emph{hierarchical-model} provides a statistical
 interpretation of high dimension Euclidean distances. As many
 data projection algorithms~\cite{pearson1901liii,maaten2008visualizing,mcinnes2018umap})
 seek to minimize or  preserve Euclidean distance, this has implications on 
 the interpretation of their results as discussed in \sref{sec:euc} of the supplementary.

%
%

\section{Distributions in High Dimensions}
\label{sec:problem}

Our approach is based on the analysis of high dimensional distributions. Following the definition in~\cite{lin2018dimensionality}, a high dimensional distribution is
one whose random vectors are  \textit{quasi-ideal}, \ie there are a large number of dimensions, most of whom are independent. This is elaborated below,  with  notations adapted from  \cite{lin2018dimensionality}.

\small

\begin{definition}
	\mbox{ }\vskip -0.2cm
	\begin{itemize}
		\item $d^2(.)$ denotes an operator for normalized squared $\Ltwo$ norm, such that for $\mathbf{x}\in \mathbb{R}^k$, $d^2(\mathbf{x})=\frac{\|\mathbf{x}\|^2}{k}$. If $\bx$ is the difference between two vectors, we refer to $\dif(\bx)$ as \textbf{normalized squared difference} (NSD);
		
		\item $d(.)$ is the normalized $\Ltwo$ norm operator, $d(.) = \sqrt{d^2(.)}$; 

		\item $\mathrm{S}(\bmu, r)$ denotes a thin shell centered at $\bmu$, with radius $r$.
	\end{itemize}
	
	\noindent Let $\bZ = \left [\bZ[1],\bZ[2], \hdots, \bZ[k] \right]^T$ denote a $k$ dimensional random vector where $\bZ[i]$ is a random variable,
	
	\begin{itemize}
		\item $\bZ$ is \textbf{high dimensional} if and only if  $\bZ$ is quasi-ideal~\cite{lin2018dimensionality}, i.e., as dimension $k\to\infty$, each dimension $\bZ[i]$ has finite fourth moment
		and a finite number of pairwise dependencies.
		
		\item  $d^2(.)$ operator  can  be applied to random vectors. $d^2(\mathbf{Z})=\frac{\|\mathbf{Z}\|^2}{k}$ is a random variable formed by averaging $\bz$'s
		squared elements;
		\item $\bmu_\bZ = \mathbb{E}(\mathbf{Z})= \left [ \E(\bZ[1]), \hdots, \E(\bZ[k]) \right]^T$ is a vector of each
		dimension's expectation;
		\item  $v_\bZ = \sum_{i=1}^k \frac{\var(\bZ[i])}{k}$ is the average variance;
		
		
		\item $\stackrel{a.s}{=}$ denotes almost-surely-equal. Thus,  the relation $P(d(\bX-\bc)=t) \to 1$ as $k\to\infty$, is written  $d^2(\bX-\bc) \as t$.
		
	\end{itemize}
	%
	\noindent  \textbf{Unit-vector-normalization.} As dimension $k \to \infty$, unit-vector-normalization causes
	individual entries to tend to $0$. For unit-vector-normalized data, the definitions of $\dif(.)$ and $v_\bZ$ are modified to avoid dividing by  $k$:
	\begin{itemize}
		\item $d^2(.)$ is a squared $\Ltwo$ norm, such that  $d^2(\mathbf{Z})=\|\mathbf{Z}\|^2$;
		\item $v_\bZ = \sum_{i=1}^k \var(\bZ[i])$, is the total variance.
	\end{itemize}
\end{definition}

\normalsize

Let $\bA$ and $\bB$ be two independent, high dimensional random vectors, with
respective mean and average variances $\bmu_\bA, \bmu_\bB$ and $v_\bA, v_\bB$.  Due to the law of large numbers, 
\textbf{the normalized squared difference between instances of $\bA$ and $\bB$  almost-surely depend, only on their mean and average variance}~\cite{lin2018dimensionality}\textbf{:}
\begin{equation}
\label{eq:main}
\dif(\bA- \bB) \as  v_\bA + v_\bB + \dif(\bmu_\bA - \bmu_\bB).
\end{equation}
This is key to understanding high dimensions.

Replacing $\bB$ in \eref{eq:main} by a distribution of  mean $\bc$ and zero variance,
 yields
\begin{equation}
\label{eq:main1}
\dif(\bA- \bc) \as  v_\bA + \dif(\bmu_\bA - \bc),  \quad \forall \bc \in \mathbb{R}^k.
\end{equation}

Let the \textbf{distribution-of-everything} represent a hypothetical, generative distribution  that fully explains the
creation of all natural images. 
If $\bA$ represents the  distribution-of-everything, 
\eref{eq:main1}
means  almost-all instances are 
equi-distant  to any point $\bc$. 
This has been used to argue that high-dimensions
are  ``cursed'' as it supposedly makes  instances
indistinguishable from each other~\cite{aggarwal2001surprising}. 
If true, the ``curse''  implies
no machine learning algorithm can work in high  dimensions.
However, if we consider a more complex  hierarchical generative model,
the conclusion changes. 

Our \emph{hierarchical-model} is based on three assumptions:
1) Images are instances of some generative distribution; 2) Generative distributions are high dimensional; 3) Except for the distribution-of-everything, each generative distribution is a sub-distribution of some parent.

The  ``curse'' does not directly apply to 
\emph{hierarchical-models} as it cannot account  for
pairwise distances between instances of the same sub-distribution.
This is because \eref{eq:main1}'s  almost-sure-equality   permits  exceptional events of infinitesimally small  probability.  
Thus if each sub-distribution $\bA_{\btheta}$
 accounts for only an infinitesimal  fraction of  $\bA$'s instances, 
the pairwise distance between its instances could be  exceptional.

While specific  sub-distributions may violate \eref{eq:main1}, 
in aggregate, they must conform to \eref{eq:main1}. 
This creates a web  of constraints   
linking all generative distributions (and images) to each other and  to the distribution-of-everything.

\section{The Hierarchical-Model}\label{sec:sub}

 We begin by defining the
relation between a distribution and its  sub-distributions. The definition
is then applied recursively to create the \emph{hierarchical-model}. 

Let $\bA$ be a high-dimensional random vector with probability density function  $f_{\bA}(\ba)$, which represents a distribution-of-everything. 
$f_{\bA}(\ba)$ is a compound distribution, i.e., it is distributed according to some sub-distributions, with these distributions' parameters themselves
being random variables. $\bTheta=\{V_{\bTheta},  \mathbf{M}_{\bTheta}, \hdots, \mbox{other parameters} \}$ is a set of random vectors representing sub-distribution parameters.  $V_{\bTheta}$ and $\mathbf{M}_{\bTheta}$  represent the  average variance and mean parameters respectively. 
We write $\bA = \bA_{\bTheta}$ to denote that  parent distribution $\bA$ is explained by its sub-distribution parameters $\bTheta$ such that:
\begin{equation}
\label{eq:compound}
f_{\bA}(\ba) = f_{\bA_{\bTheta}}(\ba) = \int^{\infty}_{-\infty} f_{\bTheta}(\btheta) f_{\bA|\bTheta}(\ba|\btheta) d\btheta.
\end{equation}
where each sub-distribution is itself high-dimensional~\footnote{An example is compounding a Gaussian distribution with mean distributed according to another Gaussian distribution. Let each dimension of $\bA$ by independently  distributed with  $\bA[i]\sim N(\bM[i], \sigma^2_1)$, $\bM[i]\sim  N(\bmu[i], \sigma^2_2)$. The parameters of $\bA$ are 
	$\bmu_\bA=\bmu, v_\bA=\sigma^2_1+\sigma^2_2$.
}.

Let $\ba_{\btheta}$ be an instance of $\bA_{\bTheta}$, it is generated by a
two-step process:
1) Generate an instance of  sub-distribution parameters $\btheta$ from $\bTheta$. The random vector associated with $\btheta$ is  $\bA_{\btheta}$; 2) Generate   instance $\ba_{\btheta}$ from $\bA_{\btheta}$.


We  recursively define  sub-distributions of sub-distributions through a hierarchical random process
$$\bA_{\bTheta^n} = ((((\bA_{\bTheta^{[0]}})_{\bTheta^{[1]}})_{\bTheta^{[2]}})_{\bTheta^{[3]}}\hdots)_{\bTheta^{[n]}} = \bA,$$ 
where operator $\bTheta^{[i]}$ represents the  random sampling of  a  sub-distribution, which in turn
forms the parent distribution for $\bTheta^{[i+1]}$. $\bA_{\bTheta^n}$ denotes an $n$ level hierarchical-process, with  
$\bA_{\bTheta^{[0]}}=\bA$.

If $\bA_{\bTheta^n}$ and $\bA_{\bTheta^m}$ denote independent runs of a hierarchical-process, starting from  $\bA$, from  \eref{eq:main},
\begin{equation}
\label{eq:vA}
\dif(\bA_{\bTheta^n} - \bA_{\bTheta^m}) \as \dif(\bA- \bA) \as 2v_{\bA}, \;\; m,n \in \mathbb{Z}^{+}_0.
\end{equation}

The instantiated sub-distribution parameters of a hierarchical-process are 
denoted as
$$\btheta^n = \btheta^{[0]}\btheta^{[1]}\btheta^{[2]}\hdots\btheta^{[n]},$$ 
where $\btheta^{[0]}$ is the parameter of $\bA$. $\btheta^n$ records the distribution parameters' values at every level, until the final distribution $\bA_{\btheta^{n}}$.  
Since the final random vector is completely characterized by the final distribution parameters, 
$\bA_{\btheta^n} = \bA_{\btheta^{[n]}}.$ 

A \emph{hierarchical-model} is a series of hierarchical processes, where at any $i^{th}$ stage, the $\bA_{\btheta^i}$ random vector  can be a  parent of multiple, new hierarchical processes. 

In the \emph{hierarchical-model}, any two data-points are instantiations of 
two independent hierarchical-processes starting from some,  most recent common ancestor. 
Let $\bA_{\btheta^n}$ denote their common ancestor and 
 $\bA_{\btheta^n\bTheta^i}$, $\bA_{\btheta^n\bTheta^j}$
 their distributions.
From \eref{eq:vA},  the normalized squared difference (NSD) between any two instances is:
\begin{equation}
\label{eq:vTheta}
\dif(\bA_{\btheta^n\bTheta^i}  - \bA_{\btheta^n\bTheta^j} ) \as   2v_{\btheta^{[n]}}, \quad  \forall i,j, n \in \mathbb{Z}^{+}_0.
\end{equation}
This is summarized in Theorem \ref{theorem:L2}. 
\begin{theorem}
	\label{theorem:L2}
	(\textbf{Pairwise Difference}) The \emph{hierarchical-model}  predicts that 
	normalized difference between  any two data-points is
	almost-surely 	$\sqrt{2v}$, where $v$ is the average variance of their most recent common ancestor distribution.
\end{theorem}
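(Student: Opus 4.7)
The plan is to reduce the theorem directly to equation~(\ref{eq:vA}), which is already proved from the fundamental identity in \eref{eq:main}. The theorem asks about two arbitrary data points in the \emph{hierarchical-model}; by the construction of the model, any such pair of points arises as the endpoints of two independent hierarchical processes that share a most recent common ancestor distribution. So the first step is simply to name this ancestor $\bA_{\btheta^n}$, with realized parameter path $\btheta^n$ and associated average variance $v_{\btheta^{[n]}}$, and to observe that conditional on the ancestor being fixed, the two data points are instantiations of \emph{independent} further hierarchical processes rooted at $\bA_{\btheta^n}$.

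Once the common ancestor is identified, the conditional situation is exactly the setting of \eref{eq:vA}: the parent distribution is $\bA_{\btheta^n}$ and the two descendants are $\bA_{\btheta^n\bTheta^i}$ and $\bA_{\btheta^n\bTheta^j}$ for some independent sub-distribution samples. Applying \eref{eq:vA} with $\bA_{\btheta^n}$ in place of $\bA$ gives
\begin{equation*}
\dif\!\left(\bA_{\btheta^n\bTheta^i}-\bA_{\btheta^n\bTheta^j}\right) \as 2 v_{\btheta^{[n]}},
\end{equation*}
which is precisely \eref{eq:vTheta}. Taking square roots (which is continuous and hence preserves almost-sure convergence in the $k\to\infty$ sense of $\stackrel{a.s.}{=}$) yields the claimed $\sqrt{2v}$, with $v=v_{\btheta^{[n]}}$ the average variance of the most recent common ancestor.

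The only genuinely delicate point is the status of the conditioning on the ancestor's random parameters $\btheta^n$. The right-hand side $2v_{\btheta^{[n]}}$ is itself a function of the random path $\btheta^n$, so the statement must be read as holding for every realization of the ancestor, which is fine because \eref{eq:vA} invokes only the quasi-ideal, high-dimensional structure of the conditional distribution and \eref{eq:main} gives the convergence uniformly in the parameters involved. Thus the remaining work is mostly bookkeeping: arguing that the ``most recent common ancestor'' is well-defined for any pair of nodes in a hierarchical process, and that the two descendant sub-processes, having branched at that ancestor, are genuinely independent in the sense required by \eref{eq:main}. I would expect this bookkeeping, rather than any new probabilistic computation, to be the main obstacle, since the quantitative content of the theorem is already packaged into \eref{eq:vA}.
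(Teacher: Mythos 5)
Your proposal matches the paper's own derivation: the paper likewise names the most recent common ancestor $\bA_{\btheta^n}$, treats the two data points as instances of independent sub-processes $\bA_{\btheta^n\bTheta^i}$ and $\bA_{\btheta^n\bTheta^j}$, and applies \eref{eq:vA} to obtain \eref{eq:vTheta}, of which the theorem is a restatement. Your added remarks on conditioning on the realized parameters and on the well-definedness of the common ancestor are sensible bookkeeping that the paper leaves implicit.
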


\begin{figure}
	\centering
	\includegraphics[width = 1.05\linewidth]{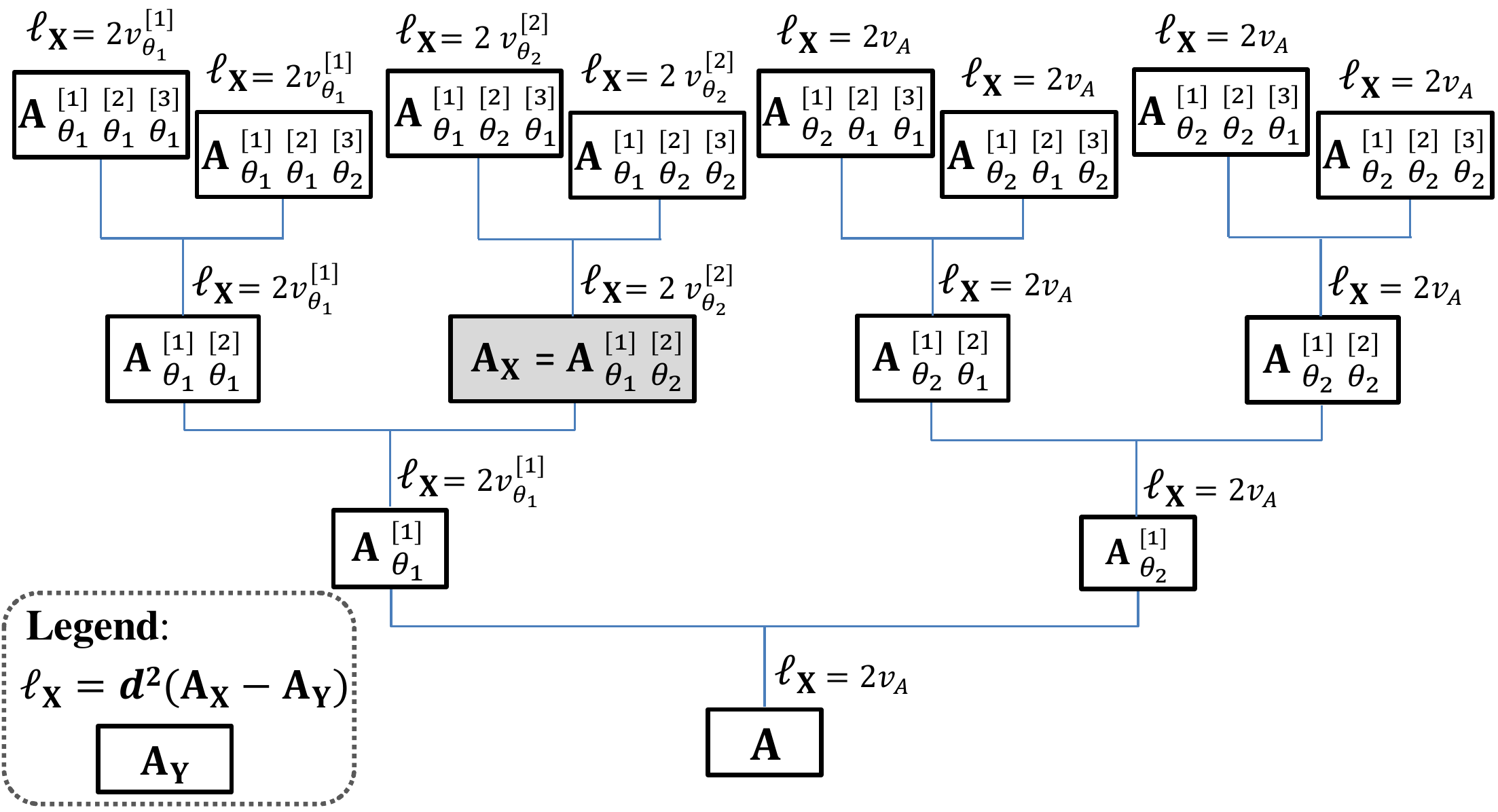}
	\caption{ A hierarchical-model with $\bA$ as   distribution-of-everything. 
		$\bA_\bY$  refers to the random vector enclosed by a rectangle.
		Normalized squared difference between $\bA_\bY$ and  the reference random vector, $\bA_\bX$, is denoted by
		$\ell_{\bX}$. From \eref{eq:vTheta},  $\ell_{\bX}$ is almost-surely 2 times the average variance of their most recent common ancestor. Each  path from root to leaf, represents a  hierarchical-process.}
	\label{fig:tree}
\end{figure}
%

\Fref{fig:tree} illustrates a
 numerical example of the pairwise NSD.
Similar to the ``curse''~\cite{aggarwal2001surprising},
this formulation 
begins with a distribution-of-everything. However,
the NSD is not a constant. 
Instead, there is a defined  pattern of distances which algorithms
can exploit as a ``blessing''.
The distance patterns  also ensure Euclidean distance remains meaningful in high dimensions, as elaborated in the supplementary.  

In practice,     NSD is very often equal to ${2v_{\bA}}$ as  the  most recent common ancestor is often the distribution-of-everything. This creates a
paradox in which algorithms working directly with the affinity matrix
experience a ``curse''   but 
those dealing with nearest-neighbor distance do not. 

\subsection{Parameter Constraints}\label{subsec:parameter_constraint}

This subsection  relates  distribution parameters across different hierarchies of the \textit{hierarchical-model}, creating the key constraints exploited in later sections. 

Thus far, a sub-distribution with identical parameters  to its parent is considered valid. Without loss of generality, we define a non-trivial hierarchical-process as one where sub-distributions  almost-surely have, finite difference in either mean or average variance, from its immediate parent: 
\begin{equation}\label{eq:trival}
\begin{split}
&p(V_{\bTheta}^{[i+1]}   = V_{\bTheta}^{[i]}) =0  \\
\mbox{or} \quad & p(\dif(\mathbf{M}^{[i+1]}_{\bTheta} - \mathbf{M}^{[i]}_{\bTheta}) = 0)=0, \quad \forall i \in \mathbb{Z}^+_0.
\end{split}
\end{equation}
A non-trivial
\emph{hierarchical-model} is one where all  hierarchical-processes are non-trivial. 

For sub-distribution $\bA_{\btheta^{n}}=\bA_{\btheta^{[n]}}$, its mean and average variance
are denoted as $\bmu_{\btheta^{[n]}}, v_{\btheta^{[n]}}$ respectively. 
The mean and average variance of its hierarchical process descendants 
are denoted by random variables $\bM_{\btheta^n\bTheta^j},V_{\btheta^n\bTheta^j}$.

\begin{theorem}
	\label{theorem:mvde}
	(\textbf{Mean-Variance Constraint}) The mean  and average variance 
	of distributions generated by a hierarchical process with  $\bA_{\btheta^n}$ 
	as parent must adhere to the constraint:
	\begin{equation}
	V_{\btheta^n\bTheta^j} + \dif(\bM_{\btheta^n\bTheta^j}-\bmu_{\btheta^{[n]}}) \as v_{\btheta^{[n]}}, \quad \forall n,j\in \mathbb{Z}^+_0.
	\end{equation}
\end{theorem}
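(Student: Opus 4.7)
The plan is to compute the normalized squared distance from a hierarchical-process instance to the fixed vector $\bmu_{\btheta^{[n]}}$ in two complementary ways and equate them. First, I would apply \eref{eq:main1} to the parent random vector $\bA_{\btheta^n}$ with $\bc = \bmu_{\btheta^{[n]}}$. Since $\bmu_{\btheta^{[n]}}$ is by definition the mean of $\bA_{\btheta^n}$, the mean-difference term vanishes, giving $\dif(\bA_{\btheta^n} - \bmu_{\btheta^{[n]}}) \as v_{\btheta^{[n]}}$.

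Second, I would condition on an arbitrary realization $\btheta^{[j]}$ of $\bTheta^j$. The sub-distribution $\bA_{\btheta^n\btheta^{[j]}}$ is itself high-dimensional, with mean and average variance equal to the realized values of $\bM_{\btheta^n\bTheta^j}$ and $V_{\btheta^n\bTheta^j}$. A second application of \eref{eq:main1}, now with $\bA = \bA_{\btheta^n\btheta^{[j]}}$ and the same constant $\bc$, gives $\dif(\bA_{\btheta^n\btheta^{[j]}} - \bmu_{\btheta^{[n]}}) \as v_{\btheta^{[j]}} + \dif(\bmu_{\btheta^{[j]}} - \bmu_{\btheta^{[n]}})$.

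To reconcile the two, I would invoke the compound-distribution identity \eref{eq:compound}: an instance produced by first sampling $\btheta^{[j]} \sim \bTheta^j$ and then drawing from $\bA_{\btheta^n\btheta^{[j]}}$ is distributed identically to an instance of $\bA_{\btheta^n}$. Both almost-sure statements therefore apply to the same random instance, so their right-hand sides must agree almost surely in $\btheta^{[j]}$. Lifting $\btheta^{[j]}$, $\bmu_{\btheta^{[j]}}$, and $v_{\btheta^{[j]}}$ back to the random vectors $\bTheta^j$, $\bM_{\btheta^n\bTheta^j}$, and $V_{\btheta^n\bTheta^j}$ then yields the stated constraint.

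The delicate step I expect to require care is the final gluing: the two almost-sure pronouncements live on slightly different probability spaces---one on the marginal space of $\bA_{\btheta^n}$, the other on spaces indexed by each $\btheta^{[j]}$---so a Fubini or disintegration argument is needed to combine them into an almost-sure statement in $\bTheta^j$ alone. The key observation is that any positive-measure set of $\btheta^{[j]}$'s on which the two right-hand sides disagree would produce, upon independent sampling of an instance, a positive-measure subset of $\bA_{\btheta^n}$'s instances violating the first almost-sure statement, yielding a contradiction.
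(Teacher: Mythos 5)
Your proposal matches the paper's proof: both compute $\dif(\cdot - \bmu_{\btheta^{[n]}})$ twice via \eref{eq:main1} --- once treating the instance as drawn from the parent $\bA_{\btheta^n}$ (giving $v_{\btheta^{[n]}}$, since the mean-difference term vanishes) and once conditioning on the sampled sub-distribution parameters (giving $V_{\btheta^n\bTheta^j} + \dif(\bM_{\btheta^n\bTheta^j}-\bmu_{\btheta^{[n]}})$) --- and then equate the two using the compound identity $\bA_{\btheta^n\bTheta^j} = \bA_{\btheta^n}$. Your closing disintegration argument for gluing the two almost-sure statements is more careful than the paper's one-line appeal to sub-distributions conforming ``in aggregate'' to the parent's constraint, but it is the same underlying idea.
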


\begin{proof}
	Recall that $\bA_{\btheta^n\bTheta^j} = \bA_{\btheta^n}$. Thus, by requiring  the sub-distributions to, in aggregate, conform to the almost-surely constraint of their parent,  using \eref{eq:main1} yields: 
	\begin{equation}
	\begin{split}\hskip -0.1cm
	\dif(\bA_{\btheta^n\bTheta^j}-  \bmu_{\btheta^{[n]}}) &\as \dif(\bA_{\btheta^n}- \bmu_{\btheta^{[n]}}) \as  v_{\btheta^{[n]}},\\
	&\as  V_{\btheta^n\bTheta^j} + \dif(\bM_{\btheta^n\bTheta^j}-\bmu_{\btheta^{[n]}})
	\end{split}
	\end{equation}
	\vskip -0.15cm
\end{proof} 

\begin{corollary}\label{col:v}
	Distributions derived from a non-trivial hierarchical-process starting at $\bA_{\btheta^n}$,
	almost-surely have average variances
	less than  $v_{\btheta^{[n]}}$. \ie,
	\begin{equation}
	V_{\btheta^n\bTheta^j} \al v_{\btheta^{[n]}}, \quad \forall j,n \in \mathbb{Z}^+_0.
	\end{equation}
	That is, if $\btheta^{[i]}$ and  $\btheta^{[j]}$
	are distribution parameters from the same non-trivial hierarchical-process,
	almost-surely, $$v_{\btheta^{[i]}} >v_{\btheta^{[j]}}, \quad \forall i<j, \;i,j \in \mathbb{Z}^+_0.$$ 
\end{corollary}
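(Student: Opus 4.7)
The plan is to derive the corollary directly from Theorem 2 by combining the non-negativity of $\dif(\cdot)$ with the non-triviality assumption in \eref{eq:trival}.

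First I would observe that Theorem 2 gives the decomposition
\begin{equation*}
V_{\btheta^n\bTheta^j} + \dif(\bM_{\btheta^n\bTheta^j}-\bmu_{\btheta^{[n]}}) \as v_{\btheta^{[n]}}.
\end{equation*}
Because $\dif(\cdot)$ is a (normalized) squared norm, it is non-negative, so the constraint immediately yields the weak inequality $V_{\btheta^n\bTheta^j} \ale v_{\btheta^{[n]}}$. The entire task is therefore to upgrade ``$\le$ a.s.'' to ``$<$ a.s.''\ using non-triviality.

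Next I would handle the one-step case, where $\bTheta^j$ is a single child operator $\bTheta^{[n+1]}$. The non-trivial condition \eref{eq:trival} gives two alternatives. In the mean-shift case, $p(\dif(\bM^{[n+1]}_\bTheta - \bmu_{\btheta^{[n]}})=0)=0$, so $\dif(\bM^{[n+1]}_\bTheta - \bmu_{\btheta^{[n]}}) > 0$ almost surely; substituting into Theorem 2 forces $V^{[n+1]}_\bTheta < v_{\btheta^{[n]}}$ a.s. In the variance-shift case, $V^{[n+1]}_\bTheta \ne v_{\btheta^{[n]}}$ a.s., which combined with the weak inequality already established also gives $V^{[n+1]}_\bTheta < v_{\btheta^{[n]}}$ a.s. Either branch of the ``or'' in \eref{eq:trival} therefore produces the strict inequality.

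For a general descendant generated by several steps of a non-trivial hierarchical-process, I would proceed by a short induction on the depth of descent below $\btheta^{[n]}$: apply the one-step argument at each level $i \to i+1$ to get $V_{\btheta^{[i+1]}} \al v_{\btheta^{[i]}}$, then use a countable union of almost-sure events to chain the strict inequalities through the finite ancestry, obtaining $V_{\btheta^n\bTheta^j} \al v_{\btheta^{[n]}}$. The second assertion in the corollary, namely $v_{\btheta^{[i]}} > v_{\btheta^{[j]}}$ for $i<j$ along a single hierarchical-process, is then exactly this chaining specialised to the realised parameter path.

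The main obstacle I anticipate is bookkeeping in the chaining step: Theorem 2 is phrased in terms of almost-sure equality relative to the parent's mean and variance, so I need to be careful that the strict inequality $V^{[i+1]} \al v_{\btheta^{[i]}}$ holds conditionally on the realised $\btheta^{[i]}$, not merely marginally, before I can iterate it down the chain. Once conditional-on-ancestor versions of Theorem 2 and \eref{eq:trival} are in hand, the iteration is a routine countable intersection of probability-one events and the corollary follows.
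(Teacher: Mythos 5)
Your proposal is correct and follows the same route as the paper, which simply states that the corollary ``follows from Theorem \ref{theorem:mvde} and \eref{eq:trival}''; you have filled in exactly the intended details (non-negativity of $\dif(\cdot)$ gives the weak inequality, either branch of the non-triviality disjunction upgrades it to strict, and chaining down the realised parameter path gives the second assertion).
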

\begin{proof}
	This follows from Theorem \ref{theorem:mvde} and \eref{eq:trival}.
\end{proof}

Manipulating \eref{eq:main1}, Theorem \ref{theorem:mvde} and Corollary \ref{col:v}  leads to distinctive-shells, the paper's  main result.

\subsection{Distinctive-Shells}
\label{sec:one}

Consider a non-trivial hierarchical-process that starts
from the distribution-of-everything,  $\bA$ and terminates at 
 $\bA_{\btheta^n}$. Instances of  $\bA_{\btheta^n}$ and
 all its sub-distributions,  
 $\bA_{\btheta^n\bTheta^j}$ are  labeled $\alpha$.

From \eref{eq:main1},  $\forall \bc \in \mathbb{R}^k$
\begin{equation} \label{eq:bb}
\dif(\bA_{\btheta^n\bTheta^j} - \bc) \as  \dif(\bA_{\btheta^n} - \bc) \as v_{\btheta^{[n]}}  + \dif (\bmu_{\btheta^{[n]}}-\bc).
\end{equation}
Thus instances of $\alpha$, are almost-always  members of the set of shells,   $\{\mathcal{S}(\bc, \sqrt{ v_{\btheta^{[n]}} + \dif(\bmu_{\btheta^{[n]}}-\bc)}) \,|\, \bc \in \mathbb{R}^k\}$. The  minimum radius shell,  $\mathcal{S}(\bmu_{\btheta^{[n]}}, \sqrt{v_{\btheta^{[n]}}})$, is termed the \textbf{distinctive-shell} and is of unique significance. 

We represent the ancestors of  $\bA_{\btheta^n}$
as $\bA_{\btheta^m}, m < n$. 
All possible non-$\alpha$ instances can  be represented  by $\bA_{\btheta^m\bTheta^i}$.  \Eref{eq:main1}
 relates all such instances to $\bmu_{\btheta^{[n]}}$
through
\begin{equation}
\label{eq:hier_dist}
\begin{split}
\dif( \bA_{\btheta^m\bTheta^i} - \bmu_{\btheta^{n}}) &
\as \dif(\bA_{\btheta^m} - \bmu_{\btheta^{n}})\\
&\as v_{\btheta^{[m]}} +  
\dif(\bmu_{\btheta^{[m]}}-\bmu_{\btheta^{[n]}}).
\end{split}
\end{equation}

From Theorem \ref{theorem:mvde}, almost-surely,
\begin{equation}
\label{eq:hier_para}
v_{\btheta^{[n]}} +d^2(\bmu_{\btheta^{[m]}}-\bmu_{\btheta^{[n]}}) =v_{\btheta^{[m]}}, \quad \forall m<n,\; m,n\in \mathbb{Z}^+_0.
\end{equation}

Combining \eref{eq:hier_para} and \eref{eq:hier_dist}, almost-surely,
\begin{equation}
\label{eq:seperate}
\dif( \bA_{\btheta^m\bTheta^i} - \bmu_{\btheta^{[n]}}) \as 2v_{\btheta^{[m]}} -v_{\btheta^{[n]}}, \; \forall m<n,\; m,n\in \mathbb{Z}^+_0.
\end{equation}

Thus for each $\bA_{\btheta^m}$ ancestor of $\bA_{\btheta^n}$,  Corollary \ref{col:v}  almost-surely implies
\begin{equation}
\label{eq:shell}
\begin{split}
& \dif(\bA_{\btheta^{0}\bTheta^i} - \bmu_{\btheta^{[n]}})\ag \dif(\bA_{\btheta^{1}\bTheta^i} - \bmu_{\btheta^{[n]}}) \ag\hdots \\
\ag & \dif(\bA_{\btheta^{n}\bTheta^i} - \bmu_{\btheta^{[n]}})\as v_{\btheta^{[n]}}. 
\end{split}
\end{equation}

Recall that $\bA_{\btheta^m\bTheta^i}$ represents all
possible non-$\alpha$ instances. Thus \eref{eq:shell}
guarantees  non-$\alpha$ instances almost-surely fall outside  the distinctive-shell. 
Further, \eref{eq:bb} enables us to estimate  
the distinctive-shell by fitting the tightest possible shell to instances of  $\alpha$.
This makes it  possible to algorithmically relate
current observations with all other potential observations!

%

\section{Shell Based Learning}

This section applies the theory of  distinctive-shells
 to  practical open-set problems, where a concept detector  trained   
on limited  data   must generalize to unknown environments. 

%
%
%

\subsection{Formulation}
\label{sec:bayes}

\noindent{\textbf{Unit-Vector-Normalization}}

Despite the complexity of the image formation process,  generative distributions of  digital images
cannot be modeled as high dimensional. This is because the brightness of all pixels in an image is correlated by a  common exposure setting, making  dimensions dependent. 
Exposure induced scaling  can be removed by unit-vector-normalization
that divides each data point by its $\ell_2$ norm, converting all data to unit vectors.

Suppose $\bA$ is a \emph{hierarchical-model's} distribution-of-everything. If each
$\ba_i$ instance,  is perturbed by a random  scalar such that 
$\widetilde{\ba_i} = s_i\ba_i$,
after unit-vector-normalization, 
$\frac{\widetilde{\ba_i}}{\|\widetilde{\ba_i}\|} = \frac{\ba_i}{\|\ba_i\|}$,
which is identical to unit vector normalization on the un-perturbed instances.
Thus,  the effect of normalization of perturbed instances can be understood
by analyzing the normalization of un-perturbed instances.

Setting $\bc$ to a zero vector, $\mathbf{0}$, in \eref{eq:main1} yields
\begin{equation}
\label{eq:scale}
\dif(\bA_{\bTheta^n}) \as \dif(\bA) \as v_\bA +\dif(\bmu_\bA)= \lambda_\bA,
\end{equation}
where $\lambda_\bA$ is a constant. Thus, unit-vector-normalization of \emph{hierarchical-model} data, is almost-surely equivalent to  scaling  the entire  \emph{hierarchical-model}  by a constant factor.

From the definition of $\dif(.)$ and \eref{eq:scale}, 
\begin{equation}
\dif(\bA) = \frac{\|\bA\|^2}{k} \as \lambda_\bA \Rightarrow \|\bA\| \as \sqrt{\lambda_\bA k},
\end{equation}
where $k$ is the number of dimensions.
If $\widehat{\bA}$ be the normalized version of $\bA$, after normalization, $\|\widehat{\bA}\|^2 \as \frac{\|\bA\|^2}{\lambda_\bA k}$. 
Since $k$ is already part of this normalization, 
for post unit-vector-normalized data, we 
 modify the definition of $\dif(.)$ and $v$ operators,  detailed in \sref{sec:problem}, to avoid another  division by $k$. This extends  all previous high dimensional analysis to post unit-vector-normalized data. 
\newline

\noindent{\textbf{Statistical Framework}}

Let $\bx \in \mathbb{R}^k$ be a  data point and  $y \in \mathbb{Z}^+_0$  some label. The goal of  learning is to  estimate $p(y|\bx)$ for all  $y$ and $\bx$. 

If  
  instances of a $\bA_{\btheta^n\bTheta^i}$ correspond to  label $\alpha$,
from \eref{eq:shell}, $\alpha$ instances almost-surely 
  lie on  distinctive-shell $\mathcal{S}(\bmu_{\btheta^{[n]}},\sqrt{v_{\btheta^{[n]}}})$,  which exclude almost-all other instances. Thus, if  $\bx$ is modified to 
  be the distance from the distinctive-shell, we have the following:
\begin{equation}
\label{eq:human}
\begin{split}
p(y=\alpha| \mathbf{x}=0) = 1, \quad p(\mathbf{x}=0| y=\alpha) = 1 \\
p(y\neq \alpha| \mathbf{x}=0) = 0, \quad p(\mathbf{x}=0| y\neq \alpha) = 0 \\
p(y=\alpha| \mathbf{x} \neq 0) = 0, \quad p(\mathbf{x} \neq 0| y=\alpha) = 0 \\
p(y\neq\alpha| \mathbf{x} \neq 0) = 1, \quad p(\mathbf{x} \neq 0| y\neq\alpha) = 1 \\
\end{split}
\end{equation}
This can be summarized as $p(y =\alpha |\bx) =p(\bx|y=\alpha)$. In practice, we use:   
\begin{equation}
\label{eq:shell_s}
p(y =\alpha |\bx) \approx p(\bx|y=\alpha).
\end{equation}

While \eref{eq:shell_s} does not represent all perturbations, it is still a remarkable approximation as it
frees us from assuming a prior (that most statistical formulations require). This is key to permitting  
open-set formulations for unknown  operating environments. 
\newline

\noindent\textbf{Shell Fitting}

Given  $l$ instances of class $\alpha$, 
$\{\mathbf{f}_0, \mathbf{f}_1, \hdots, \mathbf{f}_l\}$,
from \eref{eq:shell},  
the distinctive-shell $\mathcal{S}(\bmu_{\btheta^{[n]}},\sqrt{v_{\btheta^{[n]}}})$
is estimated by minimizing 
\begin{equation}
\label{eq:shell_min}
\argminA_{\{\bmu, v\}}\frac{1}{l} \textstyle\sum_{i=1}^l \|\|\mathbf{f}_i-\bmu\|^2 -v\|^2 + \lambda \|v\|^2.
\end{equation}
$\bmu,v$ are the respective estimates of $\bmu_{\btheta^n},v_{\btheta^n}$  and  $\lambda$  is a regularizer that encourages small shells.
Let $\bx_i = \|\mathbf{f}_i-\bmu\|^2$. 
$p(\bx|\alpha)$ is estimated   by applying a parzen window to $\bx_i$ instances. This approximates  $p(\alpha|\bx)$ in \eref{eq:shell_s}.

Note: One-class SVM libraries~\cite{chen2001one} can also estimate  $\bmu$ but  
minimizing \eref{eq:shell_min} is better in extreme cases, as shown in supplementary \sref{sec:mean}.

\subsection{Re-normalization}
\label{sec:re-norm}

This section shows the impact of coordinate frame choice on learning.
We assume  data  is unit-vector-normalized.

Similar to \sref{sec:one},  let $\bA_{\btheta^m}$ be
some ancestor distribution of $\bA_{\btheta^n}$. 
From  
\eref{eq:seperate}, the gap in distinctive-shell distance between
instances of  $\bA_{\btheta^m\bTheta^i}$   and $\alpha$ instances of  $\bA_{\btheta^n\bTheta^j}$ is:
\begin{align*}
G_{mn} &= \dif( \bA_{\btheta^{m}\bTheta^i} - \bmu_{\btheta^{[n]}}) - \dif( \bA_{\btheta^{n}\bTheta^j} - \bmu_{\btheta^{[n]}}) \\
&\as 2(v_{\btheta^{[m]}}- v_{\btheta^{[n]}}), \quad\; m \leq n. \numberthis
\end{align*}
For convenience, we shorten the expression to
\begin{equation}
g_{mn} = 2(v_{\btheta^{[m]}}- v_{\btheta^{[n]}}), \quad m \leq n.
\end{equation}

In theory, a finite  gap suffices for  separability. However,  due to  noise,  ensuring  the largest possible gap is important.  
Gaps can be altered through  \textbf{re-normalization}, where each dimension is translated by some value and 
 resultant vectors unit-normalized again to  magnitude 1.

Given  the mean of the distribution-of-everything, $\bmu_\bA$,   we can re-normalize  by subtracting $\bmu_\bA$ from all instances and dividing  by their new magnitude.   From \eref{eq:main1}, $\dif(\bA_{\bTheta^k}-\bmu_\bA ) \as v_\bA$. Thus this step is equivalent to re-centering  almost-all   instances 
and dividing them by a scalar $s_\bA = \sqrt{v_\bA}$.  Let   
$\widetilde{\bA}_{\widetilde{\btheta^n}\bTheta^i}, \widetilde{\bA}_{\widetilde{\btheta^m}\bTheta^j}$ be the re-normalized sub-distributions. The gap becomes:  
\begin{equation}
\label{eq:global_norm}
\widetilde{g_{mn}} =  \frac{2(v_{\btheta^{[m]}}- v_{\btheta^{[n]}})}{s_\bA^2} =\frac{2(v_{\btheta^{[m]}}- v_{\btheta^{[n]}})}{v_\bA}, \;\; m  \leq n.
\end{equation}

As data has been unit-vector-normalized,  $\dif(\bA-\mathbf{0}) \as v_\bA+\dif(\bmu_\bA-\mathbf{0})=1$ and  
 $v_\bA\leq 1$. Thus re-normalizing with $\bmu_\bA$ is guaranteed to not reduce  the gap. If $\bmu_\bA$ is far from zero,  $v_\bA$ can be very small,
 causing  a  corresponding huge improvement  in the gap
for almost-all pairs of distributions $m,n$.  
  Re-normalization with
 $\bmu_\bA$ is identical to standard normalization procedures in machine learning
 and helps explain the importance of this  step in general machine-learning.

The gap can be further manipulated if we have knowledge of the   ancestors of  $\bA_{\btheta^{n}}$.
This  is summarized as:  
\begin{corollary}
\label{col:re-norm}
(Re-Normalization) Let   $\bA_{\btheta^l}$ be some ancestor of $\bA_{\btheta^{n}}$, \ie $ l \in \mathbb{Z}^+_0, l \leq n$. If we re-normalize data with $\bmu_{\btheta^{l}}$, almost-surely,
\begin{equation}
\label{eq:re-norm}
    \widetilde{g_{mn}} = 
\begin{dcases}
\frac{2(v_{\btheta^{[m]}}- v_{\btheta^{[n]}})}{v_{\btheta^{[l]}}},& \text{if } \; 0\leq l \leq m \leq n,\\
\frac{2(v_{\btheta^{[l]}}-v_{\btheta^{[n]}})}{v_{\btheta^{[l]}}}, & \text{if } \; 0\leq m \leq l \leq n .
\end{dcases}
\end{equation}
\end{corollary}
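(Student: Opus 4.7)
The plan is to decompose re-normalization with $\bmu_{\btheta^{[l]}}$ into two steps: a re-centering shift $\bx \mapsto \bx - \bmu_{\btheta^{[l]}}$, then a division by the new magnitude to restore unit length. By \eref{eq:main1} applied with $\bc = \bmu_{\btheta^{[l]}}$, the almost-sure post-centering squared magnitude of any instance is determined entirely by where its distribution sits relative to $\bA_{\btheta^{[l]}}$ in the hierarchy, and this dichotomy is precisely what splits the corollary into its two cases. In each case I will compute the relevant almost-sure rescaling, re-express $\widetilde{\bmu}_{\widetilde{\btheta^{[n]}}}$ in the new coordinates, and evaluate $\widetilde{g_{mn}}$ as the difference of two squared distances from it.

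Case 1 ($l \leq m \leq n$) is routine. Both $\bA_{\btheta^m\bTheta^i}$ and $\bA_{\btheta^n\bTheta^j}$ are descendants of $\bA_{\btheta^l}$, so their post-centering squared magnitudes are both $\as v_{\btheta^{[l]}}$, and re-normalization acts as a single uniform rescaling of the entire $\bA_{\btheta^l}$ sub-tree by $1/\sqrt{v_{\btheta^{[l]}}}$—exactly as in the paragraph preceding the corollary. Every squared distance contracts by $1/v_{\btheta^{[l]}}$, so $\widetilde{g_{mn}} \as g_{mn}/v_{\btheta^{[l]}} = 2(v_{\btheta^{[m]}}-v_{\btheta^{[n]}})/v_{\btheta^{[l]}}$, which is the first branch.

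Case 2 ($m \leq l \leq n$) is the main obstacle, since $\bA_{\btheta^m\bTheta^i}$ is not a descendant of $\bA_{\btheta^l}$. Combining \eref{eq:main1} with $\dif(\bmu_{\btheta^{[m]}}-\bmu_{\btheta^{[l]}}) = v_{\btheta^{[m]}}-v_{\btheta^{[l]}}$ (from \eref{eq:hier_para}) shows that its post-centering squared magnitude is $\as 2v_{\btheta^{[m]}}-v_{\btheta^{[l]}}$, while $\bA_{\btheta^n\bTheta^j}$ still satisfies the value $v_{\btheta^{[l]}}$. The two classes are therefore rescaled by distinct scalars $\alpha = 1/\sqrt{2v_{\btheta^{[m]}}-v_{\btheta^{[l]}}}$ and $\beta = 1/\sqrt{v_{\btheta^{[l]}}}$, so the squared distance between the re-normalized $\bA_{\btheta^m\bTheta^i}$ instance and $\widetilde{\bmu}_{\widetilde{\btheta^{[n]}}}$ must be expanded as $\alpha^2\|\mathbf{p}\|^2 - 2\alpha\beta\,\mathbf{p}^T\mathbf{q} + \beta^2\|\mathbf{q}\|^2$, where $\mathbf{p} = \bA_{\btheta^m\bTheta^i} - \bmu_{\btheta^{[l]}}$ and $\mathbf{q} = \bmu_{\btheta^{[n]}} - \bmu_{\btheta^{[l]}}$.

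The only non-routine ingredient is the cross term, and the key claim is that $\mathbf{p}^T\mathbf{q} \as 0$. This I will extract from the polarization identity $2\mathbf{p}^T\mathbf{q} = \|\mathbf{p}\|^2 + \|\mathbf{q}\|^2 - \|\mathbf{p}-\mathbf{q}\|^2$ by substituting $\|\mathbf{p}\|^2 \as 2v_{\btheta^{[m]}}-v_{\btheta^{[l]}}$, $\|\mathbf{q}\|^2 = v_{\btheta^{[l]}}-v_{\btheta^{[n]}}$, and $\|\mathbf{p}-\mathbf{q}\|^2 \as 2v_{\btheta^{[m]}}-v_{\btheta^{[n]}}$ (the last from \eref{eq:seperate}); the three terms cancel exactly. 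Once the cross term drops out, the first squared distance reduces to $2 - v_{\btheta^{[n]}}/v_{\btheta^{[l]}}$; subtracting the Case 1 value $\dif(\widetilde{\bA}_{\widetilde{\btheta^n}\bTheta^j}-\widetilde{\bmu}_{\widetilde{\btheta^{[n]}}}) \as v_{\btheta^{[n]}}/v_{\btheta^{[l]}}$ yields the second branch. The subtlety to guard against is treating the orthogonality as an extra hypothesis—it is actually forced by the web of constraints already embedded in \eref{eq:main1}, Theorem \ref{theorem:mvde}, and \eref{eq:seperate}.
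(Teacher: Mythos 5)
Your proposal is correct and reaches both branches of \eref{eq:re-norm} by a route that is mechanically different from, though geometrically equivalent to, the paper's. The paper first isolates the Pythagorean structure as Lemma \ref{lemma:mean} (a right-angled triangle between a parent mean, a sub-distribution mean, and an arbitrary point), then works entirely in the re-normalized frame, invoking the unit-norm identities $v_{\widetilde{\btheta^m}} + \dif(\bmu_{\widetilde{\btheta^m}})=1$ and $v_{\widetilde{\btheta^n}} + \dif(\bmu_{\widetilde{\btheta^n}})=1$ to get $\dif(\widetilde{\bA}_{\widetilde{\btheta^m}\bTheta^i}-\bmu_{\widetilde{\btheta^n}}) \as 2-v_{\widetilde{\btheta^n}}$. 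You instead stay in the original coordinates, track the two distinct almost-sure rescale factors $\alpha=1/\sqrt{2v_{\btheta^{[m]}}-v_{\btheta^{[l]}}}$ and $\beta=1/\sqrt{v_{\btheta^{[l]}}}$ explicitly, and kill the cross term by polarization using $\|\mathbf{p}\|^2 \as 2v_{\btheta^{[m]}}-v_{\btheta^{[l]}}$, $\|\mathbf{q}\|^2 \as v_{\btheta^{[l]}}-v_{\btheta^{[n]}}$, and $\|\mathbf{p}-\mathbf{q}\|^2 \as 2v_{\btheta^{[m]}}-v_{\btheta^{[n]}}$ from \eref{eq:main1}, \eref{eq:hier_para} and \eref{eq:seperate}. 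Your orthogonality claim $\mathbf{p}^T\mathbf{q} \as 0$ is exactly the content of the paper's Lemma \ref{lemma:mean}, only stated at the level of an instance rather than a sub-distribution mean; your derivation makes explicit which three almost-sure identities force it, which is arguably more transparent, while the paper's lemma is packaged for reuse. Your Case 1 argument (uniform rescaling of the whole $\bA_{\btheta^l}$ sub-tree by $1/\sqrt{v_{\btheta^{[l]}}}$) is identical to the paper's. The final arithmetic in both branches checks out against \eref{eq:re-norm}.
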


\begin{proof}\renewcommand{\qedsymbol}{}
	 Proof is in \sref{sec:re-norm_proof} of the supplementary.
\end{proof}

  From Corollary \ref{col:v}, 
$$1 \geq  v_\bA >  v_ {\btheta^{1}} > v_ {\btheta^{2}} > v_ {\btheta^{3}} \hdots  > v_ {\btheta^{n}}.$$ Thus, the first case of \eref{eq:re-norm} implies for instances of 
 $\bA_{\btheta^{m}\bTheta^j}$, where $m$ is between $l$ and $n$,
  re-normalizing  with   $\bmu_{\btheta^{[l]}}$,
 increases separability from instances of $A_{\btheta^n\bTheta^j}$ which correspond to class $\alpha$. 
 The second case of  \eref{eq:re-norm}  shows this comes at the expense of  separability  when
 $m$ is less than $l$.  This is equivalent to stretching contrast (separation gap) of $\bA_{\btheta^{l}}$  descendants  at the expense of compressing contrast of its ancestors. 
 In the extreme case, where $l=n$, the gap between $\bA_{\btheta^{n}}$ and all its ancestors vanishes. Thus,
 standard normalization with one-class learning causes
  disastrously bad results, as shown in supplementary \sref{sec:norm_effect}.

These trade-offs motivate \sref{sec:stacked}'s training  of a stack of  one-class learners 
by re-normalizing data  with different ancestor means.  This
enables the exceptionally fine retrieval demonstrated in \fref{fig:ss_human}.

\subsection{Magic of Deep Learned Features}
\label{sec:feat}

Using deep-learned features is known to ``magically'' improves learning results.   This phenomenon can be studied in the context of \emph{hierarchical-models}. 

Part of the improvement may arise from an  innately better
representation (that we cannot explain), which reduces the ratio of  within-class variance 
to between-class variance, essentially amplifying the gap in \eref{eq:re-norm}.

However, \sref{sec:re-norm} shows coordinate frame choice  is also important
to learning. In particular, it is ideal if the 
mean of the distribution-of-everything is at $\mathbf{0}$. 
Deep-learned do indeed seem to have made this choice 
and changing their coordinate frame significantly reduces performance, as shown in \sref{sec:magic_sup} 
of the supplementary. We also 
 leverage this property in our one-class learning algorithm in \sref{sec:stacked}.

\subsection{Implementation}
\label{sec:stacked}
We term our overall algorithm \textbf{Shell-Stacked (SS)}. Details  are as follows.

\textit{Training (Algorithm \ref{algo:train}):}
Given  training images from a target class $\alpha$, we
store  their unit-vector-normalized  ResNet features~\cite{he2016deep} as $\bF$. Optionally, we are given a set of ancestor distribution means or ``external knowledge'', denoted as $\bM = [\bm_1,\bm_2, \hdots, \bm_K]$.  If $\bM$ is not given, we estimate it by crawling $K-1$ semantic concepts related to the target class $\alpha$ from the  Internet. We first estimate the mean,
$\bm$ of  training data  $\bF$. For crawled concepts, we compute the mean of their image
features and rank them from nearest to furthest from $\bm$, creating
$ [\widetilde{\bm_1},\widetilde{\bm_2}, \hdots, \widetilde{\bm_{K-1}}]$.
By taking a weighted average of $\bm$ with $\widetilde{\bm_i}$, 
we estimate  $\bm_i$ , an approximation of a parent distribution mean.  This is given by  $\bm_i = \frac{\bm +  \sum_{j=1}^i \widetilde{\bm_j}}{i+1}, \; i\geq 1, \bm_K =\mathbf{0}$.
The final zero vector 
corresponds to assuming ResNet feature's 
distribution-of-everything has  mean, $\mathbf{0}$, as described in \sref{sec:feat}.

If internet crawling is unavailable,  set $\bM = [\mathbf{0}]$. This is termed \textbf{Shell-One (SO)} and is a special case of SS.

We re-normalize all input data with 
 each  $\bm_i$ and  fit a tight shell to the re-normalized data, using 
\eref{eq:shell_min}, to estimate shell
centers, $\bmu_i$ and  probability density function
$p_i(\bx_i = x_i |\alpha)$. The output is a list  parameters $\bm_i$, $\bmu_i$ and $p_i$.

\textit{Testing (Algorithm \ref{algo:test}):} $\by$ is a unit-normalized ResNet feature. Compute $\widetilde{\by}_i$ by re-normalizing $\by$ with  $\bm_i$. The score of $\by$ is  $\frac{1}{K}\sum_{i=1}^K p_i(\dif(\bmu_i-\widetilde{\by}_i)|\alpha)$. 

\subsection{Errors in Shell Learning}
\label{sec:err}

In theory, shell fitting results should be perfect.  In  practice, 
there are three major error sources:

Firstly, computationally tractable features do not have infinite dimensions, 
with even    deep-learned features  having only a few hundred linearly independent dimensions. Thus data typically form  cloudy 
approximates of the ideal shell;  

Secondly, there can be semantic gaps between  labels and  distributions. An extreme example would be the label apple mapping to both the iPhone  and the fruit.   This causes an excessively large  common shell that
encompasses many unrelated instances. The problem can be alleviated by clustering. However, that is beyond this paper's scope;

Thirdly, while  \eref{eq:shell_min}'s regularization provides robustness its estimated shells are too small, causing some true instances to fall outside the shell.

\begin{algorithm}[h!]
\small
\hskip -0.4cm
\KwIn{1) Training features $\mathbf{F} = [\mathbf{f}_1,\mathbf{f}_2, \hdots, \mathbf{f}_N]$; \\2) (optional) "External knowledge" of ancestor distributions mean $\mathbf{M} = [\mathbf{m}_1,\mathbf{m}_2, \hdots, \mathbf{m}_K]$  }
\nl $\mathbf{S}$ = \textbf{empty list}\;
\nl     \For{$i = 1$ to $K$}{
\nl  \hskip -0.15cm $\widetilde{\mathbf{F}_i}$ = \textbf{UNIT-VECTOR-NORM}($\mathbf{F} - \mathbf{m}_i$)\footnotesize \,\# re-normalize\;\small
\nl  \hskip -0.15cm  $\mathbf{\bmu}_i$ = \textbf{SHELL-FIT}($\widetilde{\mathbf{F}}$) \,\footnotesize\# estimate shell-center with Eq.(\ref{eq:shell_min})\; \small
\nl  \hskip -0.15cm  $\bX_i = \{\bx_j | \bx_j = \|\widetilde{\mathbf{f}_{ij}} -\bmu_i\|^2\}$, \,where $\widetilde{\mathbf{f}_{ij}}=\widetilde{\mathbf{F}_i}[:,j]$\;
\footnotesize \hskip 2.0cm \# $\bx_j$ represents distance from shell-center\; \small
\nl \hskip -0.15cm        $p_i(\bx)$ = \textbf{DENSITY-ESTIMATION}($\bX_i$)\;
\nl  \hskip -0.15cm       $\mathbf{S}$.append($[\mathbf{m}_i, \bmu_i, p_i(\mathbf{x})]$)\;
}
\nl \Return $\mathbf{S}$
\caption{{\bf Shell-Training} \label{algo:train}}
\end{algorithm}

\begin{algorithm}[h]
	\small
	\hskip -0.4cm
	\KwIn{1) Test feature $\mathbf{f}$;   2)  Distinctive-shells of $\alpha$, $\mathbf{S} = [\mathbf{s}_1, \mathbf{s}_2, \hdots, \mathbf{s}_K].$}
	\nl y = 0\;
	\nl     \For{$i = 1$ to $K$}{
		\nl         $[\mathbf{m}_i, \bmu_i, p_i(\mathbf{x})]=\mathbf{s}_i$\;
		\nl         $\widetilde{\mathbf{f}_i}$ = \textbf{UNIT-VECTOR-NORM}($\mathbf{f} - \mathbf{m}_i$)\;
		\nl         $\mathbf{x}_i = \|(\widetilde{\mathbf{f}_i} -\bmu_i)\|^2$\;
		\nl         $y$ = $y + \frac{p_i(\mathbf{x}_i)}{K}$
	}
	\nl \Return $y$ \quad \quad \footnotesize \# an estimate of $p(\alpha|f)$. \small
	\caption{{\bf Shell-Testing} \label{algo:test}}
\end{algorithm}

\renewcommand{\arraystretch}{1.2}
\renewcommand{\tabcolsep}{2.2pt}
\begin{table}[htp]
	\footnotesize
	\centering
	\begin{tabular}{l|c|c|c|c|c}
		\toprule
		&\multicolumn{5}{c}{\textit{Average AUROC for each  data-set}}\\
		\hline
		&\makecell{Fashion\\-MNIST}   &\makecell{STL-10}& \makecell{Internet\\ STL-10} & \makecell{MIT-\\Places} & ASSIRA \\
		\hline
		OC-SVM~\cite{chen2001one} & 0.892 (I) & 0.799 (R) & 0.557 (R)& 0.765 (R) & 0.824 (R)\\
		SO-Ours   & 0.911 (I) & 0.958 (R) & 0.948 (R) & 0.910 (R) & 0.964 (R)\\
		SS-Ours & \textbf{0.953} (I)& \textbf{0.987} (R) & \textbf{0.975} (R) & \textbf{0.983} (R) & \textbf{0.994} (R)\\
		OC-NN~\cite{chalapathy2018anomaly} &   0.851 (I) & 0.949 (R) & 0.932 (R) & 0.895 (R) & 0.932 (R)\\
		Deep A.Det.~\cite{golan2018deep} & 0.935 (I)& 0.730 (I) & 0.717 (I) & 0.722 (I) & 0.888 (I)\\
		DSEBM~\cite{zhai2016deep}              & 0.884 (I)& 0.571 (I) & 0.560 (I) & 0.613 (I) & 0.516 (I)\\
		DAGMM~\cite{zong2018deep}                & 0.518 (I)& 0.554 (I)& 0.517 (I) & 0.530 (I) & 0.485 (I)\\
		AD-GAN~\cite{deecke2018anomaly}               & 0.884 (I)& 0.602 (I) & 0.555 (I) & 0.499 (I) & 0.534 (I)\\
		\hline
	\end{tabular}
	\caption{AUROC score of various one-class learners. Inputs are indicated with (I) for raw images and (R) for  ResNet features.   Shells-Stacked (SS) results are  noticeably good, 
		indirectly validating the \emph{hierarchical-model} used in its design. \label{tab:all}}
\end{table}
\normalsize

\section{Experiments}

This section  focuses on checking     the
 \emph{hierarchical-model's}  predictions and   quantitative analysis of  shell based  learning.   
 
\renewcommand{\arraystretch}{1.0}
\begin{table*}[tp]
	\begin{minipage}[t]{.8\textwidth}
		\centering
		\small
		\begin{tabular}{cccccccccccc}
			\hline
			& \makecell{airplane\\(ship)} & \makecell{bird\\(cat)} & \makecell{car\\(truck)} & \makecell{cat\\(dog)} & \makecell{deer\\(horse)} & \makecell{dog\\(cat)} & \makecell{horse\\(dog)} & \makecell{monkey\\(dog)} & \makecell{ship\\(truck)} & \makecell{truck\\(car)} & Average \\
			\hline
			
			SO-Ours & 0.886 & 0.970 & 0.842 & 0.713 & 0.884 & 0.746 & 0.701 & 0.929 & 0.897 & 0.619 & 0.819\\
			SS-Ours& 0.972 & 0.994 & 0.933 & 0.826 & 0.954 & 0.927 & 0.914 & 0.969 & 0.942 & 0.784 & 0.922 \\
			\hline
		\end{tabular}
		
		\begin{tabular}{cccccccccccc}
			\hline
			& \makecell{abbey\\(alley)} & \makecell{airport terminal\\(amusement park)} & \makecell{alley\\(airport terminal)} & \makecell{amusement park\\(airport terminal)} & \makecell{aquarium\\(amusement park)} & Average \\
			\midrule
			SO-Ours & 0.829 & 0.909 & 0.895 & 0.703 & 0.824 & 0.832 \\
			SS-Ours & 0.965 & 0.959 & 0.981 & 0.938 & 0.954 & 0.959 \\
			\hline
		\end{tabular}
	\end{minipage}\hfill
	\begin{minipage}[t]{.2\textwidth}\vspace*{-18pt}%
		\setlength{\abovecaptionskip}{0pt}%
		\caption{AUROC on the most difficult class pairs of Internet STL-10 and MIT-Places. 
Shells-Stacked's (SS) gain over Shells-One (SO) is much more noticeable.\label{tab:hard}}%
	\end{minipage}
\end{table*}

%

\begin{figure}[htp]	
	\centering
	\small
	\includegraphics[width = 0.8\linewidth]{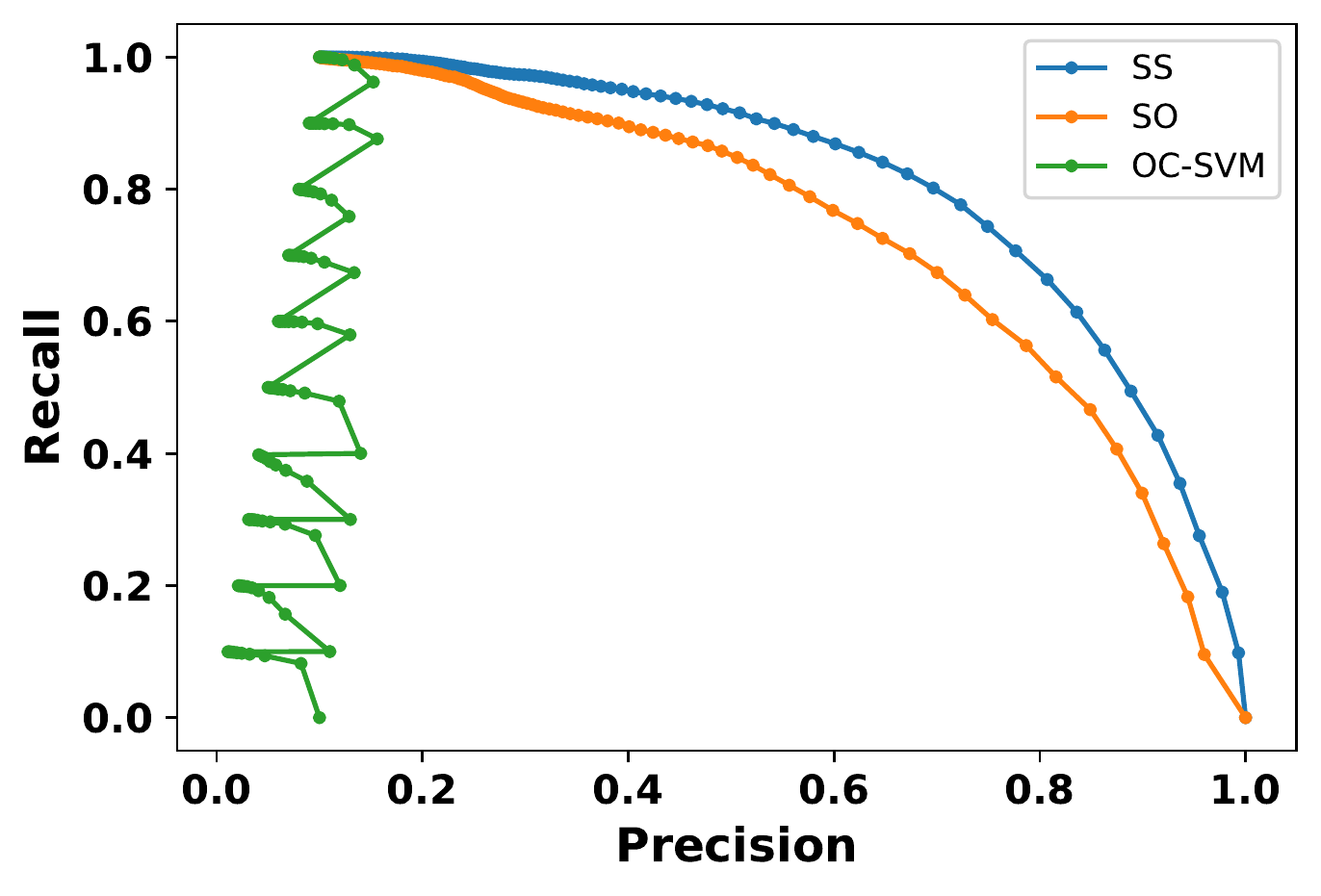}
	\caption{ Recall vs Precision from  
	integrating multiple one-class learners for 
	multiclass
	classification. This can be achieved with 
	 shell-learners  like SO and SS. With traditional one-class learners like OC-SVM,  each learner has a unique range of values, creating the vertical zig-zag line seen above. 
		  \label{fig:global}}
\end{figure}

\subsection{Hierarchical-Model Evaluation}

Fundamentally, the \emph{hierarchical-model} is a theory of data generation. 
The usefulness of such theories depend on  their predictive 
and explanatory power.

Predictive power is demonstrated  by three examples: A) 
In  \emph{hierarchical-models}, unit-vector-normalized data have  
a previously unknown, statistical maximum pairwise distance of $\sqrt{2}$. This is discussed in \sref{sec:pred} of the supplementary; B) One-class SVM with  traditional normalization  creates very poor results. This is predicted in \sref{sec:re-norm} and demonstrated in \sref{sec:norm_effect} of the  supplementary;
C) Shell-Learning should work, as  demonstrated in \sref{sec:human_shell}.

Explanatory power is demonstrated by three examples: I) Supplementary 
\sref{sec:euc} shows  high dimensional Euclidean distance  remains meaningful.
This differs from our  current paradigm
in which they are ``cursed''~\cite{aggarwal2001surprising}
and provides a mathematical model that accommodates the very high dimensional, yet
very effective
deep-learned features~\cite{arandjelovic2016netvlad,he2016deep, simonyan2014very}; II) In the \emph{hierarchical-model}, small differences in distribution parameters
suffice to  ensure  separability of their instances.  If
deep-learners are exploiting such  properties,
they could be both robust in practice but  easily fooled by deliberate perturbations~\cite{nguyen2015deep}. This creates a plausible explanation for a long-standing puzzle;
III)   \sref{sec:re-norm}  explains 
 how and why  data should  be normalized,
 a concept systematically exploited by our SS algorithm  in \sref{sec:stacked}.
 Previously, normalization's importance  was well documented~\cite{arandjelovic2013all} but there was little  justification  for its effectiveness.  
 
 \renewcommand{\arraystretch}{1.0}
 \renewcommand{\tabcolsep}{1pt}
 \begin{figure}[htp]
 	\small
 	\centering
 	\begin{tabular}{lc}
 		\shortstack{\textbf{Linear-SVM:}\\\\\\\\3 unidentified\\1 women\\8 men}&\includegraphics[width = 0.73\linewidth]{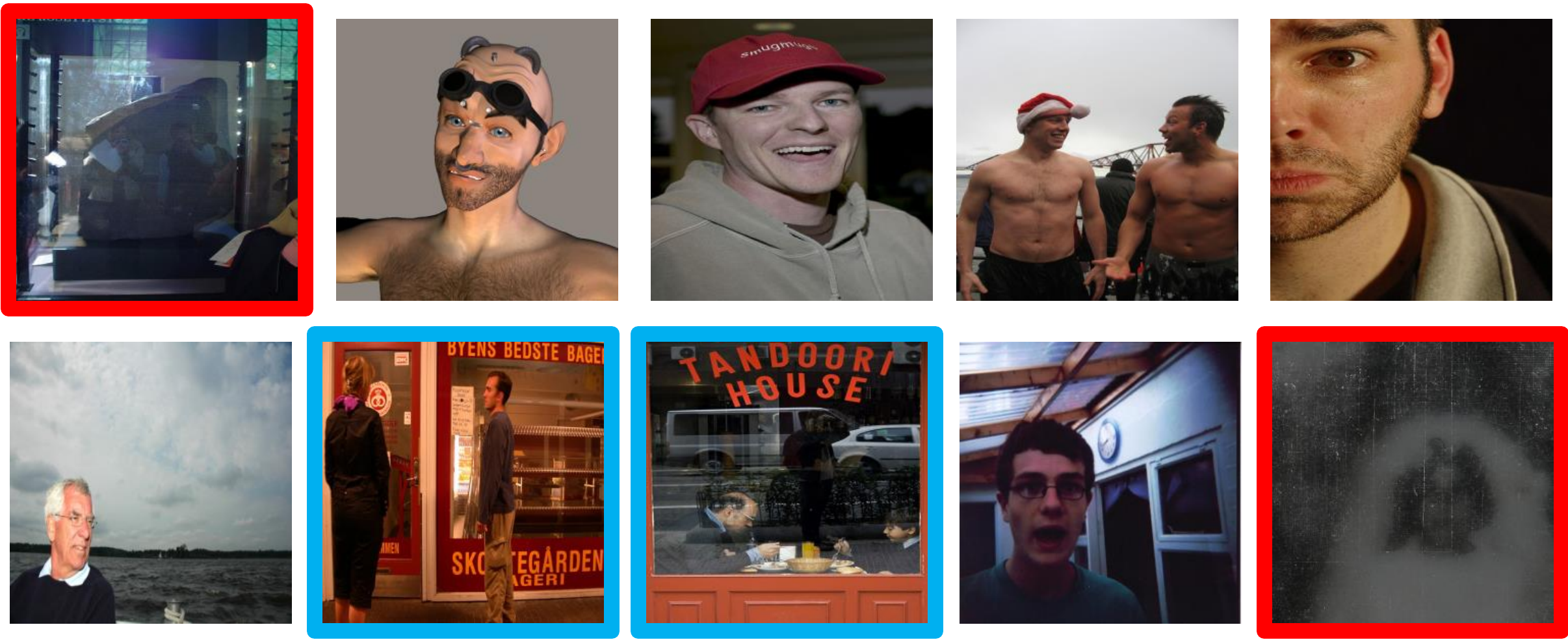}\\
 		\midrule
 		\shortstack{\textbf{Shell-One:}\\\\\\\\0 unidentified\\6 women\\9 men}&\includegraphics[width = 0.74\linewidth]{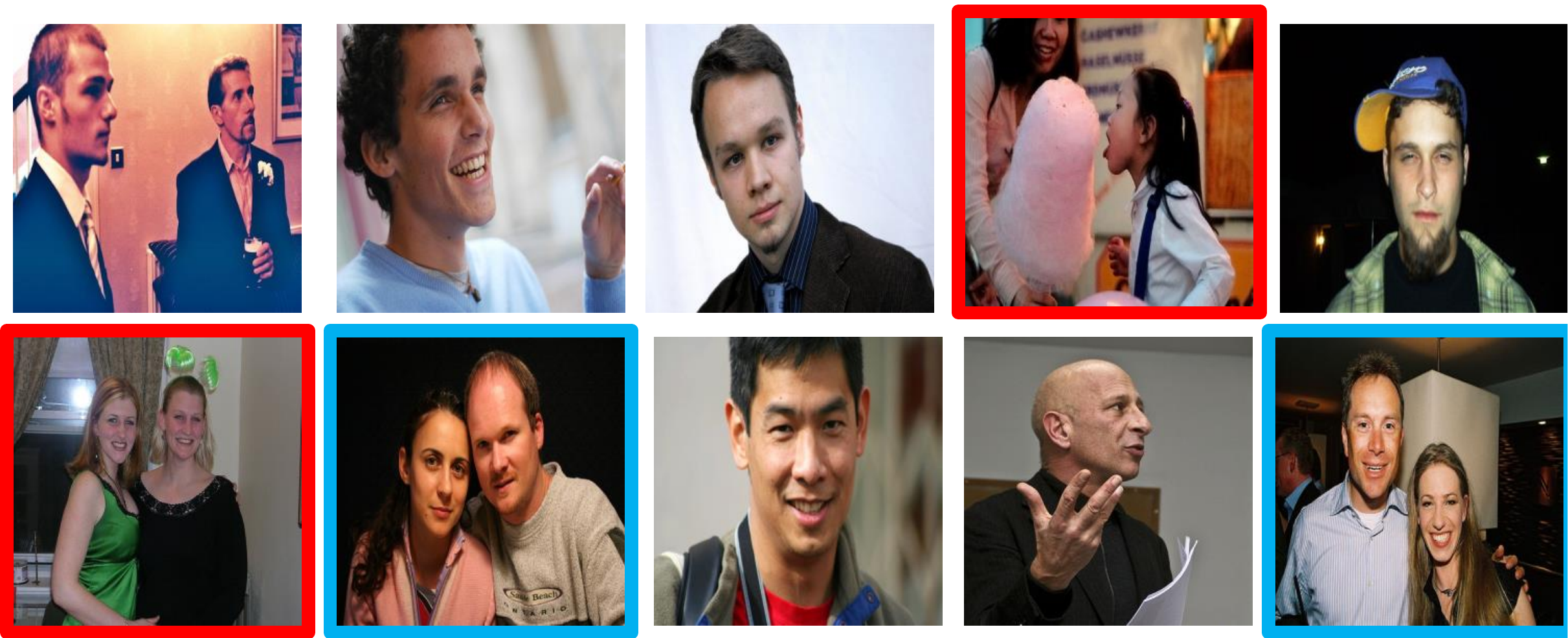}\\
 		\midrule
 		\shortstack{\textbf{Shell-Stack:}\\\\\\\\0 unidentified\\0 women\\11 men} & \includegraphics[width = 0.75\linewidth]{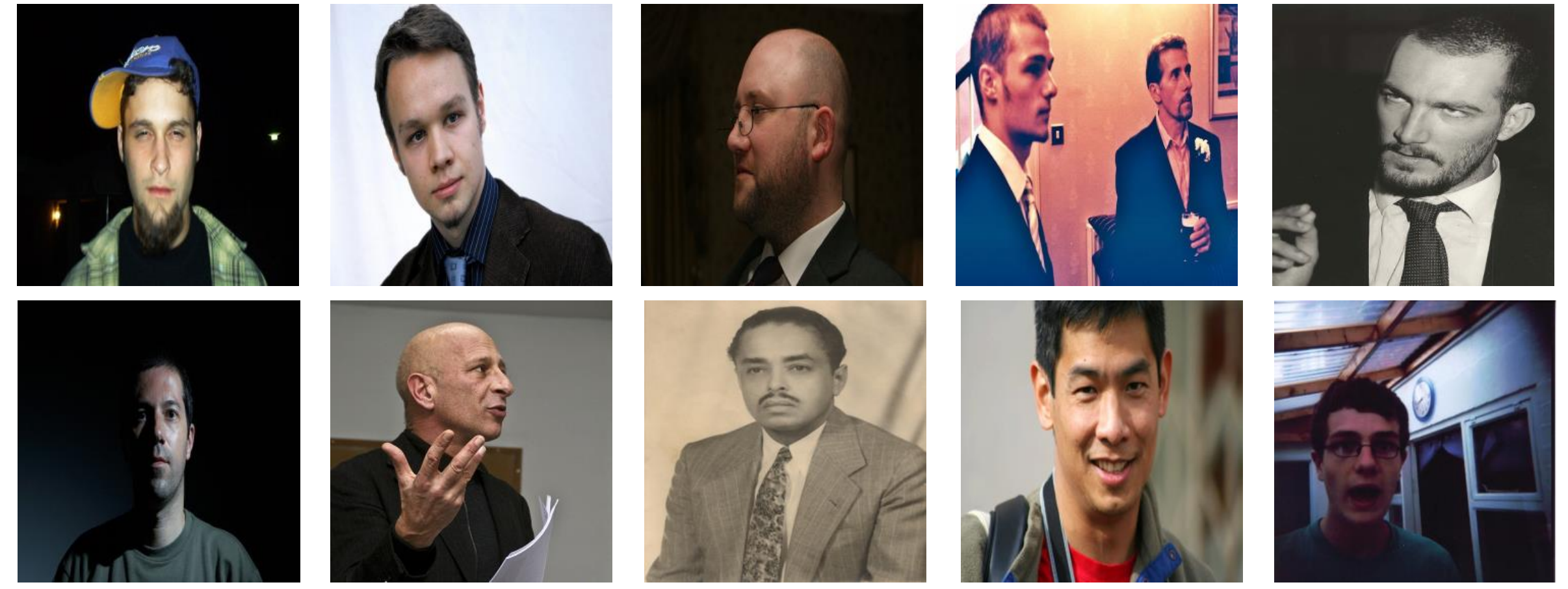}\\
 	\end{tabular}
 	\caption{Learning  to detect   ``man''   in  Flickr11k~\cite{kuo2011unsupervised}. Training images are crawled from the web with semantic search words. Top 10 returns of each algorithm are shown. Errors are  boxed in red,  partial errors in blue.  Discriminative learners like linear-SVM perform
 		poorly on such open-set tasks. Generative shell-learners are  better as they  estimate the probability of a class given the image.\label{fig:ss_human}}
 \end{figure}
 
\subsection{Shell Based Learning}
\label{sec:human_shell}

SO and SS refer to \sref{sec:stacked}'s Shell-One and Shell-Stacked algorithms.
By  strict definitions,  
SO and SS  are not one-class learners as feature choice and normalization embed
assumptions on external distributions.
In particular, SS  treats the means of other training data
as external knowledge for estimating  parent distribution means. 
 However,  given the  omnipresence of the internet, such  assumptions  are not  restrictive. 
While comparing algorithm scores   does  not represent fair competition, it  creates a baseline for assessing 
  \emph{hierarchical-model}'s  relevance in  the    real-world.

Data-sets are described by, name; [citation]; (number of training images per class). They are: Fashion-MNIST~\cite{xiao2017fashion} (10,000); STL-10~\cite{coates2011analysis}  (500); Internet STL-10~\cite{coates2011analysis}  (variable); MIT-Places first 5 classes~\cite{zhou2014learning} (1000); ASSIRA~\cite{asirra} (1000). 
STL-10 and ASSIRA represent standard bench-marks for one-class learning. Fashion-MNIST and MIT-Places contain less typical concepts
that features are not pre-trained on. Finally, Internet STL-10 tests
 algorithm's noise handling by having a training set that consist \textbf{solely} of Internet crawled images and a testing set of  STL-10 images.

Baseline algorithms are one-class SVM~\cite{chen2001one} and state-of-art deep-learned alternatives~\cite{zhai2016deep,golan2018deep,zong2018deep,deecke2018anomaly,chalapathy2018anomaly}. 
The deep-learners  typically  couple  an auto-encoder to some cost function,
creating  an end-to-end learning framework that
can modify  all stages for better performance.
In some  architectures, replacing end-to-end learning with   ResNet features improves performance. For such cases,   ResNet features are used as input. 
Results are tabulated in \tref{tab:all}.

SS and SO provide simple learning frameworks that are competitive with or better than far more complicated alternatives.  
SS's performance is so good that when trained solely with internet data in
 Internet STL, its  performance is still better than any  learner trained 
on un-corrupted STL-10 data. 
SS's performance gains are especially pronounced on challenging data. This is illustrated in \tref{tab:hard}, which shows performance on  the 
 most ambiguous class pairs
  in Internet STL-10 and MIT-Places.  
We believe these results indirectly validate the \emph{hierarchical-model} used to design SS and SO.

\subsection{Shell Learners as Classifiers}
\label{sec:precision}

Multiple independently trained shell-learners
can be used to perform multiclass classification
by assigning  each instance to the highest scoring learner.
On this task, 
SS's  average  precision over all  data-sets is
 0.862. This is only slightly below discriminative classifiers like linear-SVM at 0.899
 and much higher than radial-basis-kernel SVM at 0.437, whose density based formulation may be inappropriate for high dimensions where density vanishes.

 These  are remarkable results that, to our knowledge, have never been demonstrated   by  one-class learning formulations~\cite{kardan2016fitted}. \Fref{fig:global} shows  utilizing one-class SVM on the same task creates a vertical zig-zag precision-recall line, corresponding to each learner creating its own unique range of scores. The results are still more remarkable
 when we recall   SS is a  generative formulation, which trades some discriminative performance~\cite{ng2002discriminative} for 
open-set capability demonstrated in \fref{fig:ss_human}. 

%
%
%

\section{Conclusion}

This paper suggests many data generation processes can be explained by a \emph{hierarchical-model}.  This  makes it possible to mathematically formulate open-set problems that often seem impossible to analyze rigorously. 
We demonstrate this with a one-class learning formulation that adds new classes to a model without needing retraining. This creates the exciting prospect of life-long learners that 
can expand their understanding indefinitely~\cite{kardan2016fitted}.  
{\small
\bibliographystyle{plain}
\bibliography{egbib_up}
}


\newpage

\section{Supplementary Material (Theoretical)}

\subsection{Proof for Re-Normalization Corollary}
\label{sec:re-norm_proof}
Before proving Corollary \ref{col:re-norm}, we need to introduce a Lemma: \begin{lemma}
\label{lemma:mean}

A distribution's mean, one of it's sub-distribution's mean and  an arbitrarily point $\bc \in \mathbb{R}^k$  almost surely form a right-angled triangle,
\begin{equation}
d^2(\bM_{\btheta^n\bTheta^j} -\bc) \as d^2(\bmu_{\btheta^n} -\bc) + d^2(\bM_{\btheta^n\bTheta^j}  -\bmu_{\btheta^n})
\end{equation} 
\end{lemma}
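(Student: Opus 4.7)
The plan is to derive the identity by equating two different expressions for $\dif(\bA_{\btheta^n\bTheta^j} - \bc)$ and then invoking the Mean-Variance Constraint (Theorem~\ref{theorem:mvde}) to eliminate the random variance term.

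First I would apply \eref{eq:main1} to the random sub-distribution $\bA_{\btheta^n\bTheta^j}$, treating its random parameters $(\bM_{\btheta^n\bTheta^j}, V_{\btheta^n\bTheta^j})$ as the mean and average variance of a valid high-dimensional sub-distribution. This yields
\[
\dif(\bA_{\btheta^n\bTheta^j} - \bc) \as V_{\btheta^n\bTheta^j} + \dif(\bM_{\btheta^n\bTheta^j} - \bc).
\]
Next, since $\bA_{\btheta^n\bTheta^j} = \bA_{\btheta^n}$ by the compound-distribution construction in \eref{eq:compound}, a second application of \eref{eq:main1} to the parent $\bA_{\btheta^n}$ (whose mean and variance are the deterministic values $\bmu_{\btheta^{[n]}}, v_{\btheta^{[n]}}$) gives
\[
\dif(\bA_{\btheta^n\bTheta^j} - \bc) \as v_{\btheta^{[n]}} + \dif(\bmu_{\btheta^{[n]}} - \bc).
\]

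Equating the two right-hand sides produces an almost-sure relation involving $V_{\btheta^n\bTheta^j}$ and $\bM_{\btheta^n\bTheta^j}$. I would then substitute the Mean-Variance Constraint $V_{\btheta^n\bTheta^j} \as v_{\btheta^{[n]}} - \dif(\bM_{\btheta^n\bTheta^j} - \bmu_{\btheta^{[n]}})$ from Theorem~\ref{theorem:mvde} to eliminate $V_{\btheta^n\bTheta^j}$. The $v_{\btheta^{[n]}}$ terms cancel, and rearranging leaves exactly
\[
\dif(\bM_{\btheta^n\bTheta^j} - \bc) \as \dif(\bmu_{\btheta^{[n]}} - \bc) + \dif(\bM_{\btheta^n\bTheta^j} - \bmu_{\btheta^{[n]}}),
\]
the claimed Pythagorean-type identity, which geometrically states that the edge from the parent mean to $\bc$ and the edge from the parent mean to the sub-distribution mean meet at a right angle under the $d(\cdot)$ metric.

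The main subtlety, and the step I would take most care over, is the very first equation: it applies \eref{eq:main1} conditionally on a realization of $\bTheta^j$ (so that the sub-distribution's parameters are momentarily deterministic), and then lifts the almost-sure statement back to the joint probability space where $\bM_{\btheta^n\bTheta^j}$ and $V_{\btheta^n\bTheta^j}$ are random. Since \eref{eq:main1} holds almost-surely as $k\to\infty$ uniformly in the deterministic reference point $\bc$, this lift is legitimate, but it is worth stating explicitly so that the subsequent almost-sure cancellations are not confused between variability over instances and variability over sub-distribution parameters. Once this point is nailed down, the algebra is a one-line substitution.
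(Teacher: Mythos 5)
Your proof is correct and takes essentially the same route as the paper's: two applications of \eref{eq:main1} (once to the sub-distribution with its random parameters $V_{\btheta^n\bTheta^j}$, $\bM_{\btheta^n\bTheta^j}$, once to the parent $\bA_{\btheta^n}$), combined with the Mean-Variance Constraint of Theorem~\ref{theorem:mvde} to cancel the variance terms. The only differences are the cosmetic ordering of the substitution and your explicit (and welcome) remark about conditioning on the realized sub-distribution parameters before lifting the almost-sure statement back to the joint space, a point the paper leaves implicit.
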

\begin{proof}
From \eref{eq:main1} and Theorem \ref{theorem:mvde}, 
\begin{equation}
\label{eq:mean_a}
\begin{split}
&d^2(\bA_{\btheta^n\bTheta^j} -\bc)   \\
\as& V_{\btheta^n\bTheta^j} + d^2(\bM_{\btheta^n\bTheta^j} -\bc)\\
\as & v_{\btheta^{[n]}} - d^2(\bM_{\btheta^n\bTheta^j}-\bmu_{\btheta^{[n]}})   + d^2(\bM_{\btheta^n\bTheta^j} -\bc)
\end{split}
\end{equation}

\begin{equation}
\label{eq:mean_b}
d^2(\bA_{\btheta^n\bTheta^j} -\bc) \as d^2(\bA_{\btheta^n} -\bc) \as   v_{\btheta^{[n]}} +d^2(\mu_{\btheta^{[n]}} -\bc)
\end{equation}

Combining \eref{eq:mean_a}, \eref{eq:mean_b}, yields
$$d^2(\bM_{\btheta^n\bTheta^j} -\bc) \as d(\bmu_{\btheta^n} -\bc) + d^2(\bM_{\btheta^n\bTheta^j}  -\bmu_{\btheta^n})
$$
\end{proof}

Using Lemma \ref{lemma:mean}, the proof for the Re-Normalization Corollary \ref{col:re-norm} is:
\begin{proof}
If  $l$ is an  ancestor of both $m,n$, it can be considered a distribution-of-everything encompassing both sub-distributions. Thus,  re-normalizing with   mean $\bmu_{\btheta^l}$
only involves a translation and division by a  scalar common to both $m$ and $n$.
Thus, from \eref{eq:global_norm}, $\widetilde{g_{mn}} = \frac{2(v_{\btheta^{m}}- v_{\btheta^{n}})}{v_{\btheta^{l}}}$, proving the first case of \eref{eq:re-norm}.

For the case  
$l \leq m$,  with  $\btheta^m$ representing a parent of $\btheta^l$.
 After 
re-normalization with $\bmu_{\btheta^l}$,  almost-surely:
\begin{equation}
\label{eq:1_const}
\begin{split}
\bmu_{\widetilde{\btheta^l}} &=\mathbf{0};\\
v_{\widetilde{\btheta^n}} + d^2(\bmu_{\widetilde{\btheta^n}}) &= 1; \quad \mbox{because } d^2(\widetilde{\bA}_{\widetilde{\btheta^n}} - \mathbf{0}) \as 1 \\
v_{\widetilde{\btheta^m}} + d^2(\bmu_{\widetilde{\btheta^m}}) &= 1; \quad \mbox{because } d^2(\widetilde{\bA}_{\widetilde{\btheta^m}} - \mathbf{0}) \as 1 \\
\end{split}
\end{equation}

Replacing $\bc$  in Lemma \ref{lemma:mean}, with  $ \bmu_{\widetilde{\btheta^m}}$,
\begin{equation}
\label{eq:mean_apply}
d^2(\bM_{\widetilde{\theta^l}\bTheta^j} - \bmu_{\widetilde{\btheta^m}}) \as d^2(\bmu_{\widetilde{\btheta^m}}-\bmu_{\widetilde{\btheta^l}}) + d^2(\bM_{\widetilde{\theta^l}\bTheta^j}-\bmu_{\widetilde{\btheta^l}}).
\end{equation}
As $\bmu_{\widetilde{\btheta^n}}$
is an instance of $\bM_{\widetilde{\theta^l}\bTheta^j}$,  \eref{eq:1_const}
and \eref{eq:mean_apply}, almost-surely result in 
\begin{equation}
\label{eq:mean_apply1}
\begin{split}
d^2(\bmu_{\widetilde{\btheta^n}} - \bmu_{\widetilde{\btheta^m}}) &= d^2(\bmu_{\widetilde{\btheta^m}}-\bmu_{\widetilde{\btheta^l}}) + d^2(\bmu_{\widetilde{\btheta^n}}-\bmu_{\widetilde{\btheta^l}})\\
&= d^2(\bmu_{\widetilde{\btheta^m}}) + d^2(\bmu_{\widetilde{\btheta^n}})
\end{split}
\end{equation}

Combining \eref{eq:main1}, \eref{eq:1_const}, \eref{eq:mean_apply1}, almost-surely  
\begin{equation}
\begin{split}
d^2(\widetilde{\bA}_{\widetilde{\btheta^m}\bTheta^i} - \bmu_{\widetilde{\btheta^n}}) 
&\as v_{\widetilde{\btheta^m}} + d^2(\bmu_{\widetilde{\btheta^m}} -\bmu_{\widetilde{\btheta^n}})\\
&= v_{\widetilde{\btheta^m}} + d^2(\bmu_{\widetilde{\btheta^m}}) + d(\bmu_{\widetilde{\btheta^n}})\\
&= 2- v_{\widetilde{\btheta^n}}
\end{split}
\end{equation}
This proves the last case in \eref{eq:re-norm}, 
\begin{equation}
\widetilde{g_{mn}} = 2 (1- v_{\widetilde{\btheta^n}}) = \frac{2(v_{\btheta^l}-v_{\btheta^n})}{v_{\btheta^l}}
\end{equation}

\end{proof}

\subsection{Euclidean Distance in High Dimensions}
\label{sec:euc}
In the \emph{hierarchical-model}, the Euclidean distance between two high dimension
instances is almost-surely a reflection of how recently they shared a common ancestor. 
This is summarized in the Euclidean distance  corollary below:

\begin{corollary}
\label{col:nn}

(Euclidean distance) In a non-trivial
 \emph{hierarchical-model}, ranking data points by euclidean distance from $\ba$, is almost surely,
ranking  based on how recently they shared an ancestor with $\ba$.

\end{corollary}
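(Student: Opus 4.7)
The plan is to reduce the corollary directly to Theorem \ref{theorem:L2} and Corollary \ref{col:v}. Fix the reference instance $\ba$, and observe that its ancestral lineage in the \emph{hierarchical-model} is a totally ordered chain of distributions $\bA_{\btheta^{[0]}} \supset \bA_{\btheta^{[1]}} \supset \cdots \supset \bA_{\btheta^{[n_a]}}$, terminating in the leaf that actually generated $\ba$. For any other instance $\bb$, its most recent common ancestor with $\ba$ must itself be an ancestor of $\ba$, and is therefore some $\bA_{\btheta^{[k(\bb)]}}$ sitting on this chain. The depth $k(\bb)$ is exactly the quantitative measure of ``how recently $\bb$ shared an ancestor with $\ba$.''

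Next, I would invoke Theorem \ref{theorem:L2} to translate depths into distances: for each candidate $\bb$,
\begin{equation}
d^2(\ba - \bb) \as 2 v_{\btheta^{[k(\bb)]}}.
\end{equation}
Because the \emph{hierarchical-model} is non-trivial, Corollary \ref{col:v} applies along the single chain of ancestors of $\ba$, giving the strict almost-sure inequality
\begin{equation}
v_{\btheta^{[0]}} > v_{\btheta^{[1]}} > \cdots > v_{\btheta^{[n_a]}}.
\end{equation}
Thus a larger $k(\bb)$ (more recent common ancestor) yields a strictly smaller almost-sure value for $d^2(\ba-\bb)$, and conversely. Monotonicity of $\sqrt{\cdot}$ then transfers the ordering to Euclidean distance itself.

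Putting these ingredients together: given any two candidates $\bb_1$ and $\bb_2$ with $k(\bb_1) > k(\bb_2)$, we have $v_{\btheta^{[k(\bb_1)]}} < v_{\btheta^{[k(\bb_2)]}}$ almost surely, hence $d(\ba-\bb_1) < d(\ba-\bb_2)$ almost surely, which is precisely the claim that Euclidean-distance ranking coincides with common-ancestor recency ranking.

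The main subtlety, rather than any hard calculation, is bookkeeping the almost-sure qualifier across a set of candidate points: Theorem \ref{theorem:L2} gives the distance equality for each pair individually with probability one, so when one wants the ranking of a finite (or countable) collection to hold simultaneously, one intersects countably many probability-one events, which is still probability one. A second point worth flagging, but not an obstacle, is that when two candidates share the \emph{same} most recent common ancestor with $\ba$ (i.e., $k(\bb_1) = k(\bb_2)$), Theorem \ref{theorem:L2} only asserts equality of their almost-sure distances to $\ba$; the corollary should therefore be read as a weak ranking with ties precisely at equal ancestor depths, which is exactly the content the statement intends.
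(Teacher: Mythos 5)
Your proof is correct and follows essentially the same route as the paper's: fix $\ba$'s ancestor chain, apply Theorem~\ref{theorem:L2} to equate pairwise distance with twice the common ancestor's average variance, and use Corollary~\ref{col:v} to get the strict monotone ordering of those variances along the chain. Your added remarks on intersecting countably many almost-sure events and on ties at equal ancestor depth are more careful than the paper's version but do not change the argument.
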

\begin{proof}
Let $\ba$ be an instance of $A_{\btheta^m}$, where $$\btheta^m = \btheta^{[0]}\btheta^{[1]}\btheta^{[2]}\hdots \btheta^{[m]},$$ is  an instantiation of a non-trivial hierarchical process.
 The average variance  corresponding to the parameters $\btheta^{[i]}$ is $v_{\btheta[i]}$. From  corollary \ref{col:v}, almost surely,  $$v_{\btheta[0]} >v_{\btheta[1]} >v_{\btheta[2]} > \hdots >v_{\btheta[m]}.$$

For any data-point $\mathbf{p}$ in the \emph{hierarchical-model},  Theorem \ref{theorem:L2} implies, almost-surely,
$d^2(\ba-\mathbf{p}) = 2v_{\btheta[i]}$, where $i$ is the index of their most recent common ancestor.  
Thus, ranking based euclidean distance from $\ba$; is almost-surely, ranking based on average
variance of the most recent ancestor; is almost-surely,  ranking based on how
recently the points shared an ancestor. 
\end{proof}

\subsection{Predicted Distance Histograms}
\label{sec:pred}

This section analyzes distances with deep-learned features to
see if they follow the patterns predicted by the \emph{hierarchical-model}.
Results for real and simulated data are displayed  in \fref{fig:hist}.

\textbf{A) Statistical maximum pair-wise distance is  $\sqrt{2}$:}
Recall that for 
 unit-vector-normalized data, $$d^2(\bA-\mathbf{0}) \as  v_{\bA} + d^2(\bmu_{\bA}-\mathbf{0}) = 1.$$ Thus, $v_\bA \leq 1$ 
From \fref{fig:tree} and  Corollary \ref{col:nn},   the distance between any two instances is almost-surely less than or equal to  $\sqrt{2v_\bA}$ which is in turn less than or equal to  $\sqrt{2}$. 
This creates a statistical maximum pairwise distance is  $\sqrt{2}$ that is 
much less than the geometric maximum of $2$.

\textbf{B) Distance to random unit-vector almost-surely constant:}

From \eref{eq:main1}
\begin{equation}
d^2(\bA_{\bTheta^i}-\bc) \as v_{\bA} + d(\bmu_\bA -\bc) = \mbox{constant}.
\end{equation}
This equation has often been mis-interpreted as a ``curse''. While this is true, a different
pattern emerges when considering pairwise distances;

\textbf{C) Log  pairwise histogram shows a variety of distances:} 
From theorem \ref{theorem:L2} and \fref{fig:tree}, not all pairwise distances are constants. 
While many pairwise distances  converge to $\sqrt{2v_\bA}$, the log  histogram displays a variety of distances.

\begin{figure*}
  \begin{minipage}[c]{0.45\textwidth}
  \begin{tabular}{c}
 \includegraphics[width = 0.6\linewidth]{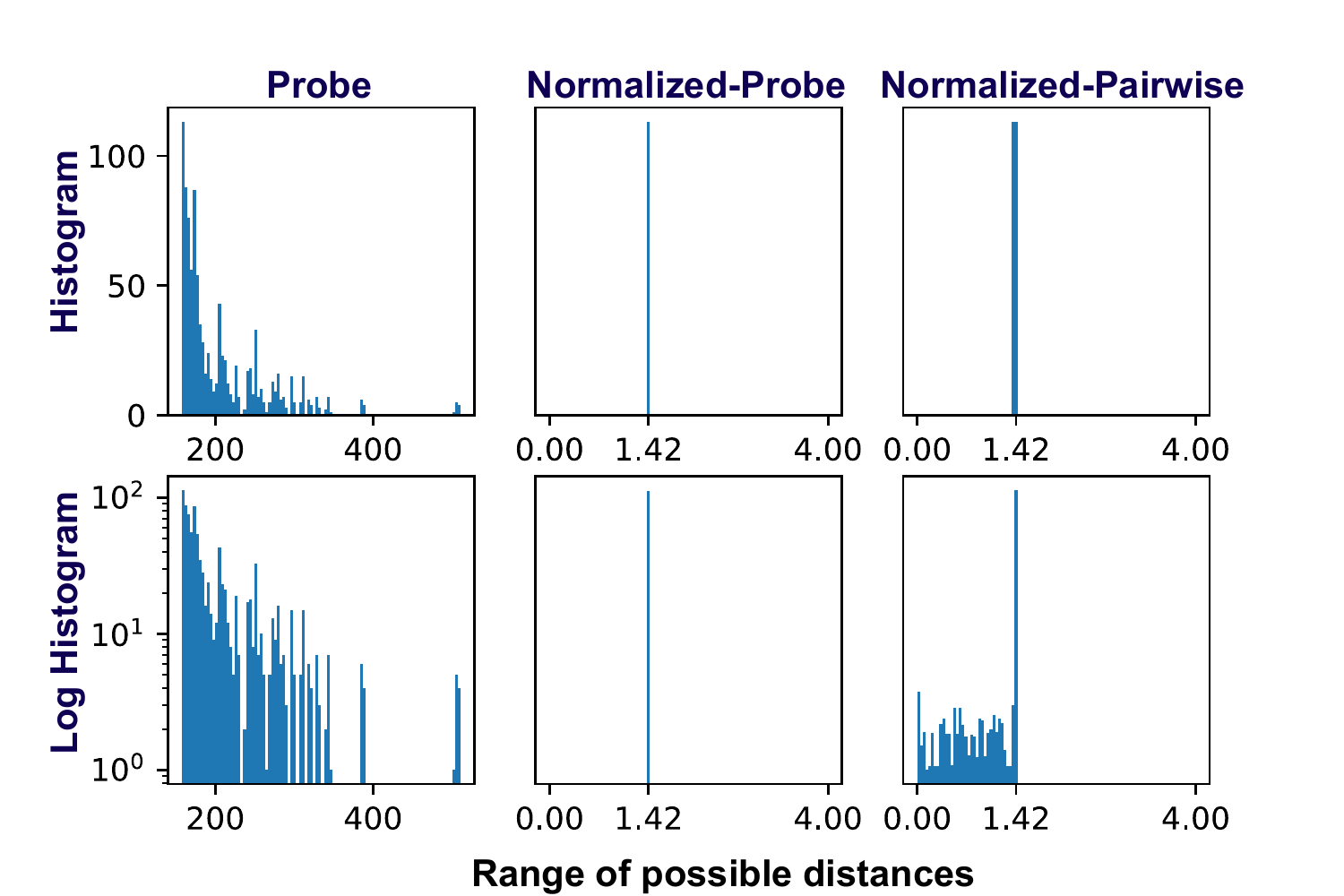}\\
 \includegraphics[width = 0.6\linewidth]{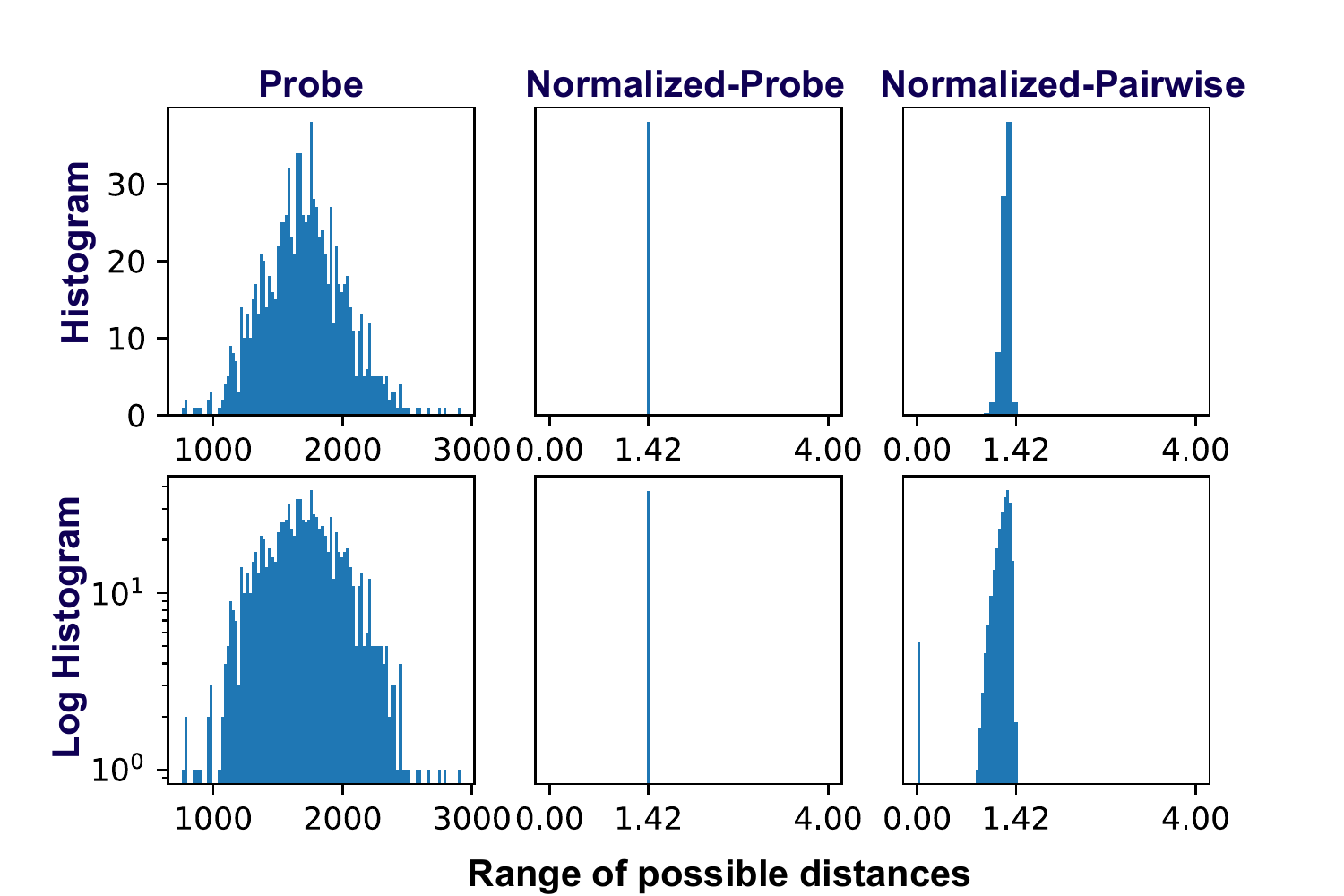}
 \end{tabular}
  \end{minipage}
   \hskip -2cm
  \begin{minipage}[c]{0.65\textwidth}
  \caption{
   \textbf{Probe} is the histogram of the distance of all data-set instances to a single, unit vector probe. Data derived from a hierarchical-process, will have a histogram sharply peaked at some point.  This is clearly not the case for both simulated and real-data. \textbf{Normalized-Probe} involves unit-vector-normalization of all data  before computing their distance to the probe.  Data  derived from a hierarchical-process perturbed by some scaling functions will have  histograms which  peak sharply at $\sqrt{2}=1.42$. This happens for both simulated and real-data. \textbf{Normalized-Pairwise} is the histogram of pair-wise distance between data-set points. Data derived from complex hierarchical-processes, have histograms that  peak sharply  at $1.42$ but the log histogram should show some instances that are less than $1.42$. $1.42$ is the statistical maximum distance which almost no distance should exceed.  \textbf{Summary:}  A hierarchical-process with scalar perturbations is an eerily accurate model of real-world data. This is reflected in histograms which exhibit the surprising characteristics predicted by the hierarchical-processes and in our ability to create simulations   that  mimic these characteristics. }
  \label{fig:hist}
  \end{minipage}
\end{figure*}

\newpage

\section{Supplementary Material (Empirical)}

\subsection{Errors}
\Fref{fig:shell_dist} shows distances of instances from a shell's center. Labels
are color coded. The shell is trained with instances of the red class. 
Observe that in practice,  distances do not lie and a perfect constant but form a
 a cloud. Despite this, points of  the red class are separable from the rest. Separability can be increased through re-normalization in  \sref{sec:re-norm}. 

\begin{figure}
\centering
\includegraphics[width = 0.49\linewidth]{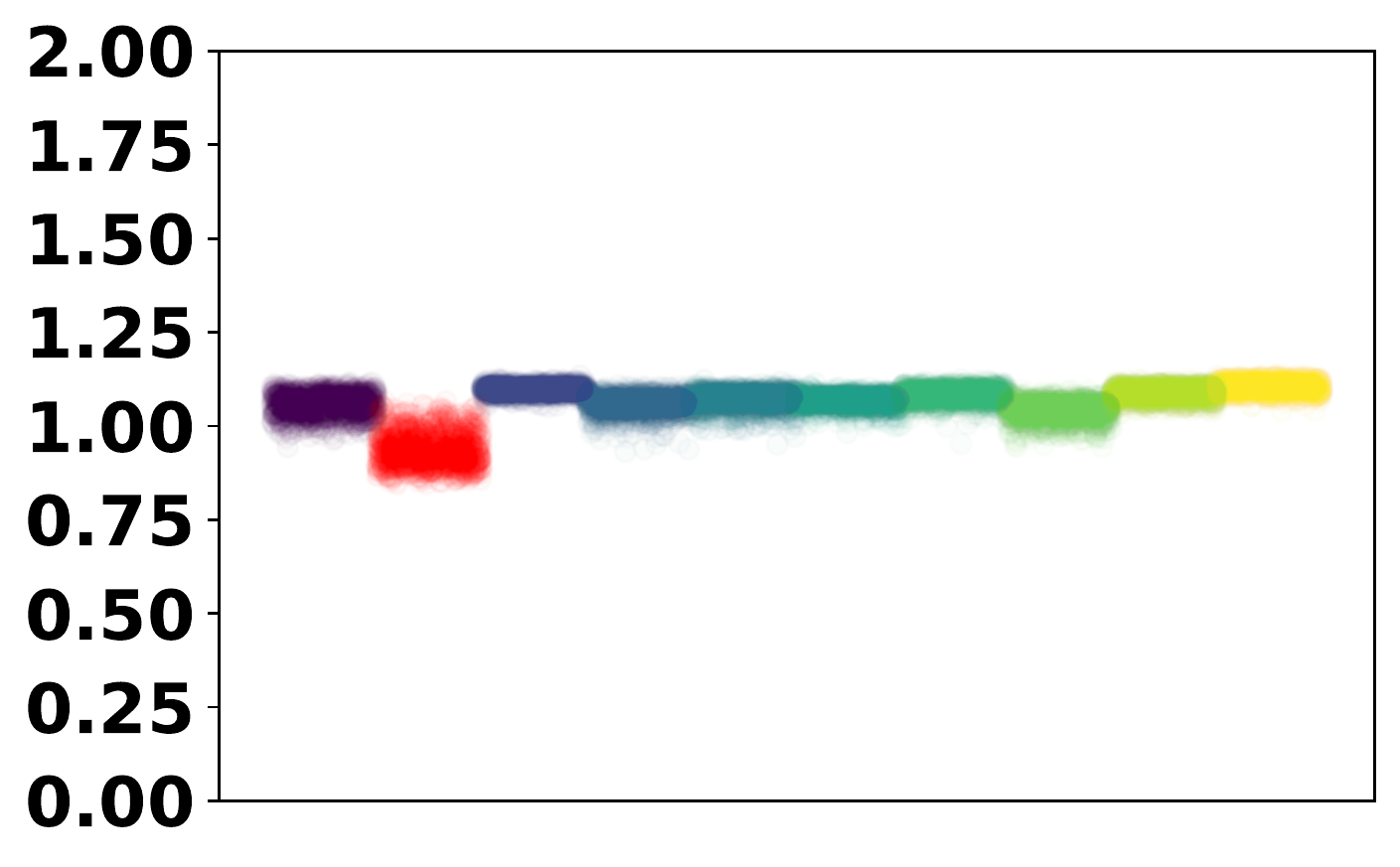}
\includegraphics[width = 0.49\linewidth]{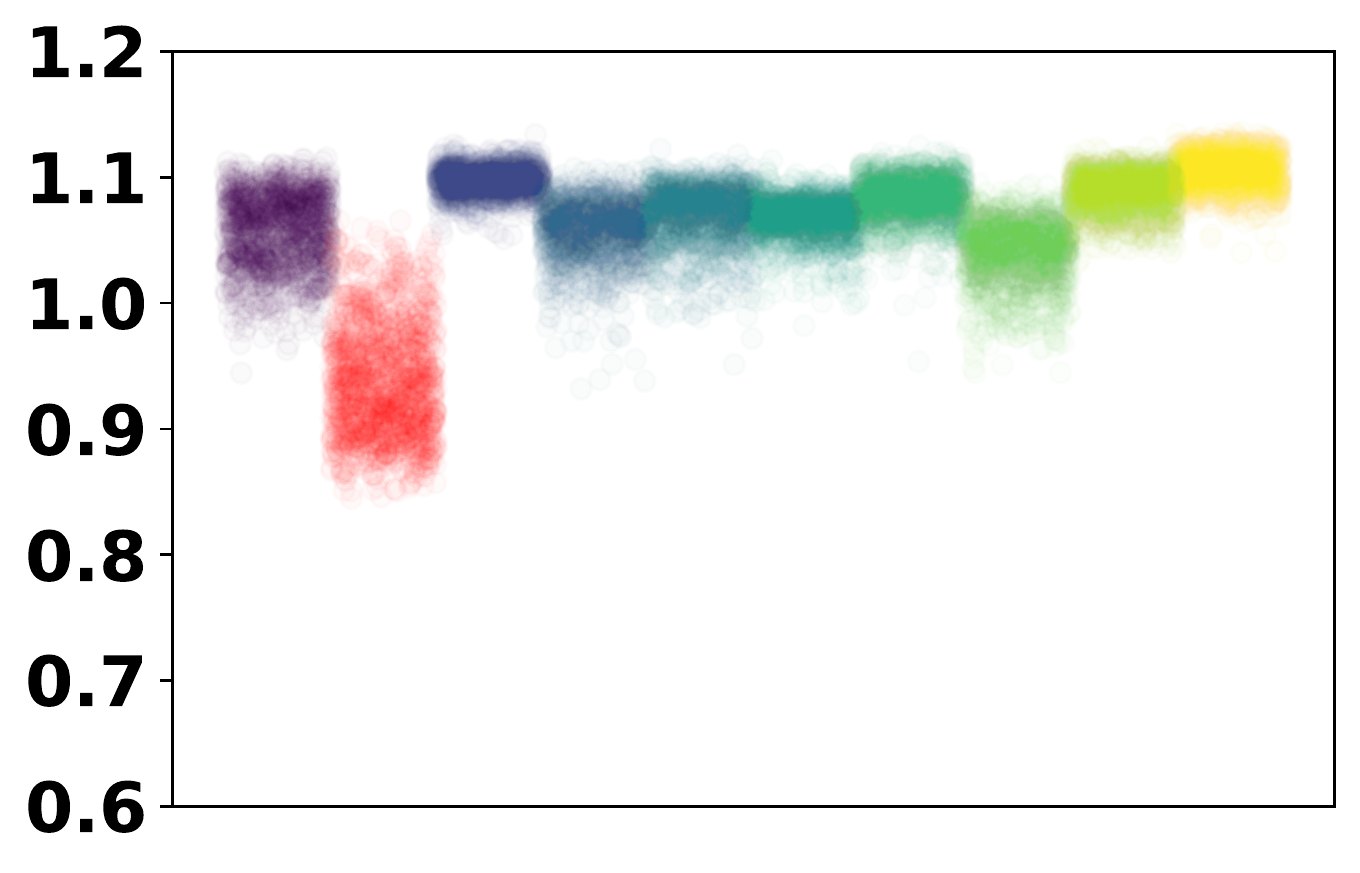}
\caption{Distances to shell center  for  the ``bird'' class in STL-10~\cite{coates2011analysis}. Colors represent different classes of STL-10 dataset. Red color represents the 'bird' class to which the shell is fitted.\label{fig:shell_dist}}
\end{figure}

\subsection{Normalization in One-Class Learning}
\label{sec:norm_effect}

To date, there has been no clear guide on how to perform normalization in one-class learning. \Tref{tab:norm} shows  traditional normalization which involves
 centering data and converting it to unit-vectors causes extremely poor results.
 This was predicted  in \ref{sec:re-norm}. As a result one-class learning papers
  perform no normalization. 
 \tref{tab:norm}.  
\begin{table*}
\centering
\begin{tabular}{lccccccccccc}
\toprule
&\multicolumn{11}{c}{STL-10~\cite{coates2011analysis} (ResNet Features~\cite{he2016deep})} \\
\hline
&airplane & bird & car & cat & deer & dog & horse & monkey & ship & truck & \textit{Ave.}\\
\midrule
OC-SVM(no-normalization) & 0.854 & 0.748 & 0.949 & 0.689 & 0.857 & 0.553 & 0.792 & 0.709 & 0.929 & 0.905 & 0.799 \\
OC-SVM(traditional-normalization) & 0.550 & 0.458 & 0.670 & 0.468 & 0.492 & 0.351 & 0.565 & 0.471 & 0.722 & 0.472 & 0.522 \\
\hline
\hline
\end{tabular}
\caption{
AUROC score of OC-SVM~\cite{chen2001one}  on STL-10. As predicted in \sref{sec:re-norm},
traditional normalization makes   one-class learning very much worse.  
As such normalization is seldom perfomed. \label{tab:norm}}
\end{table*}

\subsection{Shell-Learning vs One-Class SVM}
\label{sec:mean}

Given properly normalized data, shell-learning (SO) is similar to one-class SVM. However,
there is a significant difference in more difficult conditions. 
\Fref{fig:mean} shows the effect of shifting the mean used for normalization towards a target class, the degenerate case given in \eref{eq:re-norm}.  Observe that while 
shell-learnings performance decreases, it remains much more stable than one-class SVM. 

\begin{figure}
\centering
\includegraphics[scale=0.4]{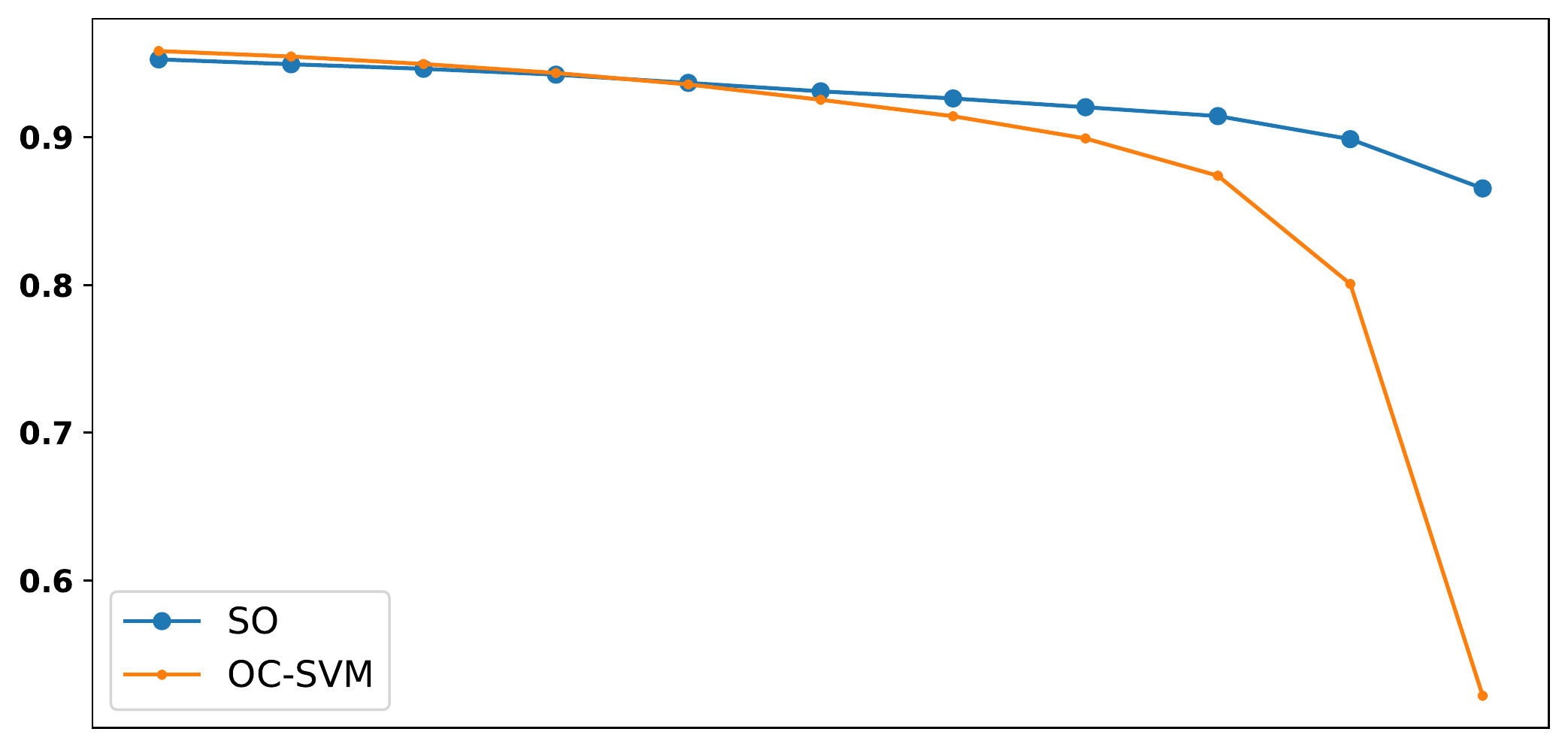}
\caption{Average AUROC score on STL-10 dataset, with increasingly  poor normalization. 
Observe that shell-learning (SO) is much more robust than traditonal one-class SVM~\cite{chen2001one} to poor normalization. \label{fig:mean}}
\end{figure}

\subsection{Magic of Deep Learning Features}
\label{sec:magic_sup}
Applying one-class SVM on deep-learned features creates magically good results. 
Much of this result is because deep-learned features use a coordinate frame in which zero corresponds to the mean of the distribution-of-everything. Shifting 
the coordinate frame causes a marked deterioration in the results 
as shown in 
\fref{fig:dl_mean}. 
\begin{figure}
\centering
\includegraphics[scale=0.4]{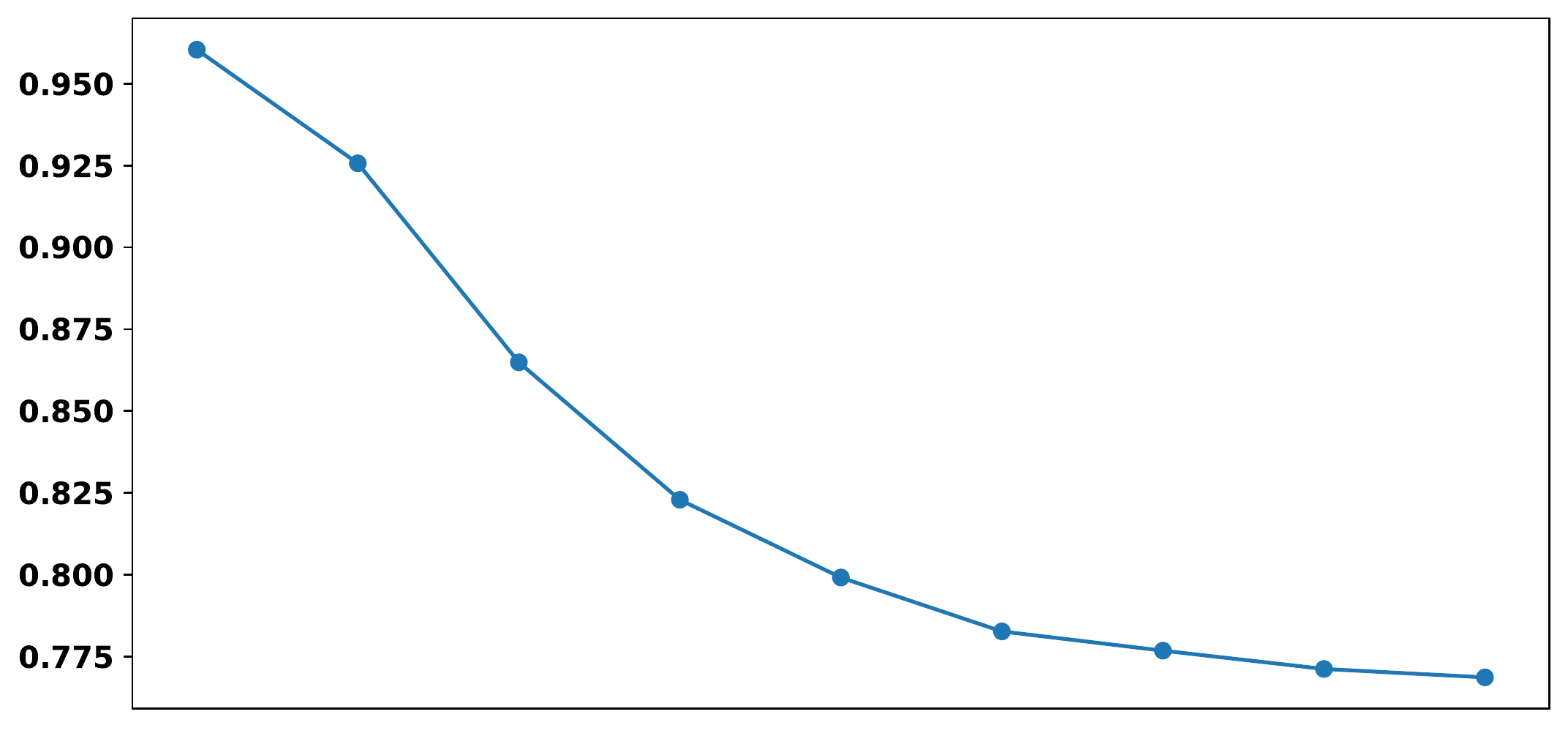}
\caption{Average AUROC score on STL-10 dataset using ResNet~\cite{he2016deep} and OC-SVM~\cite{chen2001one}. Observe that much of the magically good performance of ResNet~\cite{he2016deep} derives from its choice of coordinate frame. Shifting the mean  from origin caueses a steady  detotriation in results. \label{fig:dl_mean}}
\end{figure}

\subsection{Effect of Shells-Stacked (SS)}
\label{sec:stack_effect}

\Fref{fig:stack_effect}  plots the one-class detection AUROC score on STL-10 dataset, with increasingly
stacked shells. This shows the effectiveness of re-normalization in incorporating external knowledge into a one-class learning framework. 
\begin{figure}[htp]
\centering
\includegraphics[scale=0.4]{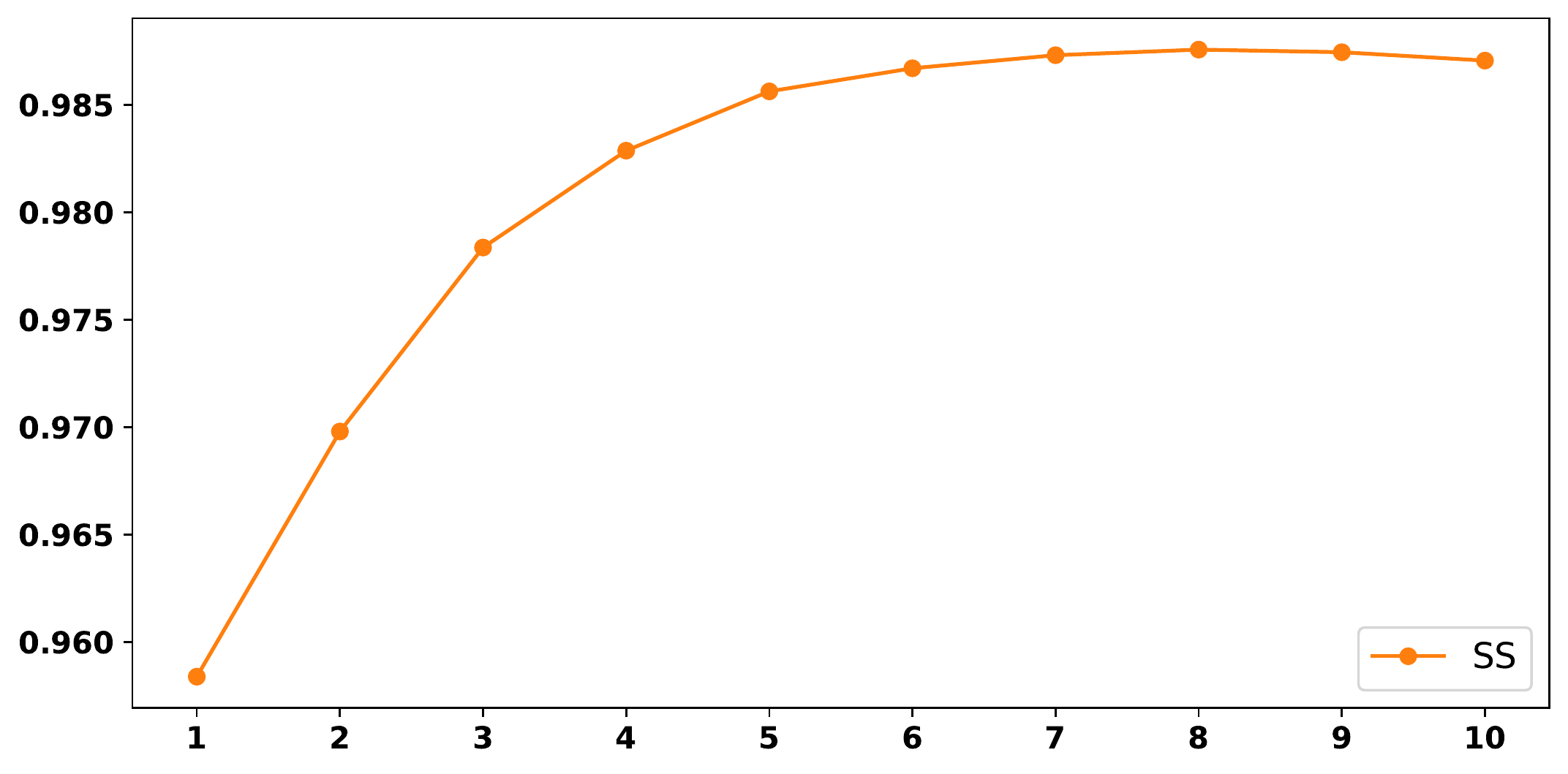}
\caption{AUROC score on STL-10 dataset for  Shells-Stacked (SS)  with increasing
number of shells. This demnonstartes it ability to leverage external information. \label{fig:stack_effect}}
\end{figure}

\subsection{Mean-Variance Constraint Predicted by Hierarchical-Model}
Sec. \ref{subsec:parameter_constraint} shows that the parent and child distributions generated by a \emph{hierarchical process} in the \emph{hierarchical-model} must adhere to Theorem \ref{theorem:mvde}. Results in \tref{tab:mvde} shows that this relationship in real-world data follows the mean-variance constraint predicted by our \emph{hierarchical-model}. Statistics for various fine-grained types of dogs and cats are collected from ImageNet data under Domestic Animal-(Domestic Dog, Domestic Cat) hierachical synsets. Animals and transports data are from STL-10 dataset. Images are converted into ResNet features using pretrained ResNet50 and unit-normalized. Variances and means are calculated based on unit-normalized ResNet features. Low error ratio indicates the mean-variance relationship				 of parent and child distributions in real-world data is consistent with Theorem \ref{theorem:mvde}.  
\begin{table*}
\centering
\begin{tabular}{c|c|c|c}
\toprule
\makecell{\\ Parent Distribution \\ $\bA_{\btheta^n}$} & \makecell{Average Variance \\  of Parent Distribution \\ $v_{\btheta^{[n]}}$} & \makecell{Mean Average Variance of Sub-Distribution + \\ NSD of Mean Vectors for all Sub-Distributions\\ $V_{\btheta^n\bTheta^j}$ + $\dif(\bM_{\btheta^n\bTheta^j}-\bmu_{\btheta^{[n]}})$} & \makecell{Error \\Ratio \\\%} \\
\hline
Domestic Dog & 0.000308 & 0.000309 & 0.039\% \\
Poodle Dog & 0.000260 & 0.000264 & 1.794\% \\
Spitz & 0.000253 & 0.000259 & 2.363\% \\
Shepherd Dog & 0.000300 & 0.000300 & 0.289\% \\
Sled Dog & 0.000235 & 0.000245 & 4.113\% \\
Watch Dog & 0.000298 & 0.000303 & 1.857\% \\
Sennenhunde & 0.000229 & 0.000229 & 0.041\% \\
Working Dog & 0.000306 & 0.000306 & 0.125\% \\
Domestic Cat & 0.000269 & 0.000268 & 0.119\% \\
Animals & 0.000337 & 0.000337 & 0.000\% \\
Transports & 0.000336 & 0.000336 & 0.000\% \\
\hline
\hline
\end{tabular}
\caption{Average variance of parent distribution, and predicted average varaince of parent distribution using Theorem 2, and the respective error ratio in percentage. \label{tab:mvde}}
\end{table*}

\subsection{Detailed Precision Scores for Each Class in Different Datasets}
We provide detailed numbers of multi-class recognition precision scores for each class in all the datasets we tested in \tref{tab:precision}.
\begin{table*}
\centering
\begin{tabular}{lccccccccccc}
\toprule
&\multicolumn{11}{c}{A) Fashion-MNIST (Raw Image)} \\
\hline
& t-shirt & trouser & pullover & dress & coat & sandal & shirt & sneaker & bag & boot & \textit{Ave.} \\
\midrule
SS-Ours & 0.802 & 0.996 & 0.565 & 0.718 & 0.508 & 0.801 & 0.292 & 0.905 & 0.924 & 0.913 & 0.742 \\
SVM(Linear) & 0.812 & 0.968 & 0.713 & 0.799 & 0.691 & 0.951 & 0.669 & 0.901 & 0.912 & 0.920 & 0.834 \\
SVM(RBF Kernel) & 0.000 & 0.295 & 0.081 & 0.000 & 0.000 & 0.000 & 0.197 & 0.468 & 0.000 & 0.721 & 0.176 \\

\hline
\hline
\end{tabular}

\centering
\begin{tabular}{lccccccccccc}
&\multicolumn{11}{c}{B) STL-10~\cite{coates2011analysis} (ResNet-50 Features)} \\
\hline
&airplane & bird & car & cat & deer & dog & horse & monkey & ship & truck & \textit{Ave.}\\
\midrule
SS-Ours & 0.922 & 0.969 & 0.961 & 0.779 & 0.891 & 0.729 & 0.861 & 0.938 & 0.894 & 0.803 & 0.875 \\
SVM(Linear)   & 0.935 & 0.954 & 0.951 & 0.885 & 0.895 & 0.862 & 0.903 & 0.909 & 0.909 & 0.888 & 0.909 \\
SVM(RBF Kernel) & 0.958 & 0.948 & 0.927 & 0.908 & 0.794 & 0.659 & 0.950 & 0.909 & 0.805 & 0.813 & 0.867 \\
\hline
\hline
\end{tabular}

\begin{tabular}{lccccccccccc}
&\multicolumn{11}{c}{C) Internet STL-10 (ResNet-50 Features)} \\
\hline
&airplane & bird & car & cat & deer & dog & horse & monkey & ship & truck & \textit{Ave.}\\
\midrule
SS-Ours & 0.646 & 0.626 & 0.941 & 0.910 & 0.706 & 0.572 & 0.936 & 0.964 & 0.857 & 0.870 & 0.803 \\
SVM(Linear)   & 0.780 & 0.816 & 0.873 & 0.877 & 0.687 & 0.722 & 0.934 & 0.936 & 0.798 & 0.850 & 0.827 \\
SVM(RBF Kernel) & 0.000 & 0.100 & 0.000 & 0.000 & 0.000 & 0.000 & 0.000 & 0.000 & 0.000 & 0.000 & 0.010 \\
\hline
\hline
\end{tabular}

\begin{tabular}{lcccccc|ccc}
&\multicolumn{6}{c}{D) MIT-Places~\cite{zhou2014learning}}&\multicolumn{3}{c}{E) Assira~\cite{asirra}} \\
\hline
&abbey & \makecell{airport\\ terminal} & alley & \makecell{amusement \\park} & aquarium & \textit{Ave.} & cat & dog &\textit{Ave.} \\
\midrule
SS-Ours & 0.955 & 0.896 & 0.919 & 0.772 & 0.955 & 0.899 & 0.995 & 0.971 & 0.983 \\
SVM(Linear) & 0.936 & 0.941 & 0.946 & 0.911 & 0.962 & 0.939 & 0.989 & 0.990 & 0.990 \\
SVM(RBF Kernel) & 0.945 & 0.905 & 0.915 & 0.723 & 0.924 & 0.882 & 0.000 & 0.500 & 0.250 \\
\hline
\end{tabular}
\caption{Precision score of various multi-class classifiers. Integration of multiple SS as a multi-class classifier achieves recognition precision only slightly below Linear SVM in most cases. \label{tab:precision}}
\end{table*}

\subsection{Detailed AUROC Scores for Each Class in Different Datasets}
We provide detailed numbers of one-class detection AUROC scores for each class in all the datasets we tested in \tref{tab:all_auroc}.
\begin{table*}
\centering
\begin{tabular}{lccccccccccc}
\toprule
&\multicolumn{11}{c}{A) Fashion-MNIST (Raw Image)} \\
\hline
& t-shirt & trouser & pullover & dress & coat & sandal & shirt & sneaker & bag & boot & \textit{Ave.} \\
\midrule
OC-SVM~\cite{chen2001one} & 0.884 & 0.972 & 0.857 & 0.900 & 0.876 & 0.868 & 0.787 & 0.982 & 0.816 & 0.976 & 0.892 \\
SO-Ours & 0.911 & 0.975 & 0.881 & 0.912 & 0.904 & 0.921 & 0.772 & 0.986 & 0.856 & 0.990 & 0.911 \\
SS-Ours & 0.954 & 0.988 & 0.921 & 0.967 & 0.931 & 0.975 & 0.830 & 0.989 & 0.981 & 0.991 & 0.953 \\
OC-NN & 0.818 & 0.963 & 0.739 & 0.917 & 0.916 & 0.932 & 0.678 & 0.948 & 0.614 & 0.982 & 0.851 \\
Deep A.Det~\cite{golan2018deep} & 0.918 & 0.981 & 0.916 & 0.879 & 0.898 & 0.934 & 0.827 & 0.991 & 0.987 & 0.994 & 0.932 \\
DSEBM & 0.916 & 0.718 & 0.883 & 0.873 & 0.852 & 0.871 & 0.734 & 0.981 & 0.860 & 0.971 & 0.884 \\
DAGMM & 0.421 & 0.551 & 0.504 & 0.571 & 0.269 & 0.705 & 0.483 & 0.835 & 0.499 & 0.340 & 0.518 \\
AD-GAN & 0.899 & 0.819 & 0.876 & 0.912 & 0.865 & 0.896 & 0.743 & 0.972 & 0.890 & 0.971 & 0.884 \\
\hline
\hline
\end{tabular}

\centering
\begin{tabular}{lccccccccccc}
&\multicolumn{11}{c}{B) STL-10~\cite{coates2011analysis} (ResNet-50 Features)} \\
\hline
&airplane & bird & car & cat & deer & dog & horse & monkey & ship & truck & \textit{Ave.}\\
\midrule
OC-SVM & 0.854 & 0.748 & 0.949 & 0.689 & 0.857 & 0.553 & 0.792 & 0.709 & 0.929 & 0.905 & 0.799 \\
SO-Ours & 0.974 & 0.965 & 0.985 & 0.894 & 0.962 & 0.928 & 0.960 & 0.962 & 0.983 & 0.969 & 0.958 \\
SS-Ours & 0.993 & 0.994 & 0.993 & 0.976 & 0.981 & 0.978 & 0.984 & 0.992 & 0.993 & 0.987 & 0.987 \\
OC-NN & 0.973 & 0.962 & 0.969 & 0.881 & 0.955 & 0.894 & 0.952 & 0.966 & 0.972 & 0.971 & 0.949 \\
Deep A.Det. & 0.575 & 0.544 & 0.801 & 0.632 & 0.819 & 0.725 & 0.841 & 0.858 & 0.797 & 0.706 & 0.730 \\
DSEBM & 0.677 & 0.590 & 0.425 & 0.630 & 0.735 & 0.569 & 0.534 & 0.639 & 0.607 & 0.308 & 0.571 \\
DAGMM & 0.729 & 0.503 & 0.613 & 0.596 & 0.647 & 0.479 & 0.591 & 0.519 & 0.235 & 0.627 & 0.554 \\
AD-GAN & 0.751 & 0.570 & 0.480 & 0.600 & 0.772 & 0.623 & 0.508 & 0.582 & 0.671 & 0.465 & 0.602 \\
\hline
\hline
\end{tabular}

\begin{tabular}{lccccccccccc}
&\multicolumn{11}{c}{C) Internet STL-10 (ResNet-50 Features)} \\
\hline
&airplane & bird & car & cat & deer & dog & horse & monkey & ship & truck & \textit{Ave.}\\
\midrule
OC-SVM & 0.733 & 0.403 & 0.895 & 0.442 & 0.464 & 0.246 & 0.419 & 0.275 & 0.860 & 0.835 & 0.557 \\
SO-Ours & 0.959 & 0.971 & 0.977 & 0.876 & 0.965 & 0.922 & 0.946 & 0.960 & 0.980 & 0.926 & 0.948 \\
SS-Ours & 0.983 & 0.990 & 0.989 & 0.923 & 0.982 & 0.973 & 0.976 & 0.987 & 0.990 & 0.955 & 0.975 \\
OC-NN & 0.939 & 0.955 & 0.967 & 0.851 & 0.949 & 0.891 & 0.938 & 0.941 & 0.968 & 0.921 & 0.932 \\
Deep A.Det. & 0.528 & 0.616 & 0.834 & 0.697 & 0.774 & 0.695 & 0.844 & 0.803 & 0.695 & 0.686 & 0.717 \\
DSEBM & 0.743 & 0.582 & 0.272 & 0.582 & 0.750 & 0.571 & 0.548 & 0.633 & 0.633 & 0.289 & 0.560 \\
DAGMM & 0.607 & 0.500 & 0.486 & 0.481 & 0.564 & 0.530 & 0.485 & 0.619 & 0.408 & 0.489 & 0.517 \\
AD-GAN & 0.743 & 0.533 & 0.406 & 0.550 & 0.734 & 0.558 & 0.501 & 0.604 & 0.618 & 0.306 & 0.555 \\

\hline
\hline
\end{tabular}

\begin{tabular}{lcccccc|ccc}
&\multicolumn{6}{c}{D) MIT-Places~\cite{zhou2014learning}}&\multicolumn{3}{c}{E) Assira~\cite{asirra}} \\
\hline
&abbey & \makecell{airport\\ terminal} & alley & \makecell{amusement \\park} & aquarium & \textit{Ave.} & cat & dog &\textit{Ave.} \\
\midrule
OC-SVM & 0.880 & 0.800 & 0.865 & 0.630 & 0.650  & 0.765 & 0.899 & 0.750 & 0.824\\
SO-Ours & 0.928 & 0.936 & 0.948 & 0.837 & 0.900 & 0.910 
& 0.984 & 0.943 & 0.964 \\
SS-Ours & 0.986 & 0.987 & 0.986 & 0.967 & 0.989 & 0.983 
& 0.993 & 0.996 & 0.994 \\
OC-NN & 0.918 & 0.928 & 0.910 & 0.818 & 0.900 & 0.895 & 0.979 & 0.882 & 0.931 \\
Deep A.Det. & 0.824 & 0.735 & 0.798 & 0.604 & 0.647 & 0.722 &  0.883 & 0.892 & 0.888 \\
DSEBM & 0.676 & 0.630 & 0.633 & 0.462 & 0.662 & 0.613 & 0.460 & 0.571 & 0.516\\
DAGMM & 0.660 & 0.520 & 0.548 & 0.547 & 0.374 & 0.530 & 0.498 & 0.472 & 0.485\\
AD-GAN & 0.382 & 0.601 & 0.618 & 0.462 & 0.430 & 0.499 & 0.548 & 0.519 & 0.534 \\
\hline
\end{tabular}
\caption{AUROC scores for one-class detection task on each class of various datasets using different models. SO and SS implemented with understanding of hierarchical-models is  competitive with
recent deep-learned models across a wide range of tasks and features, with SS being almost always the best. \label{tab:all_auroc}} 
\end{table*}

\subsection{Clustering}
\label{sec:clustering}
The \emph{shell based learning} can also be applied to the problem of unsupervised clustering. The result is illustrated in \fref{fig:clustering}, where we conduct unsupervised clustering on a mixed dataset consisting of images from three classes, namely cat, dog and panda. Images are converted to 2048-dimensional feature vectors using pretrained ResNet50 model followed by unit-normalization operation, after which our shell based learning model searches for three different distinctive shells of the three classes. We plot the distances of each data point to each distinctive shell. Color of the points represents their ground-truth class label. The shells returned by our model are occupied almost solely by feature points of its corresponding label, and the points are separated nicely according to their distances to the distinctive shells. This preliminary experiment reveals the great potential of our \emph{shell based learning} on the clustering problem that is worth further exporing in the next paper.
\begin{figure*}
\centering
\includegraphics[width = 0.3\linewidth]{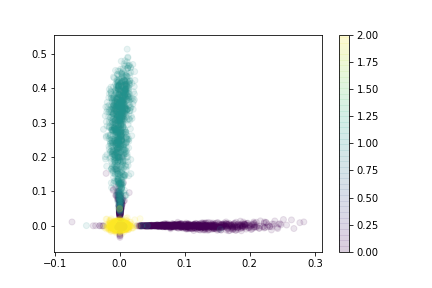}
\includegraphics[width = 0.3\linewidth]{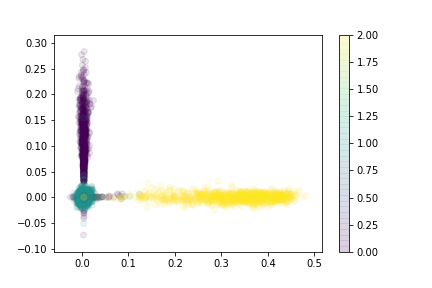}
\includegraphics[width = 0.3\linewidth]{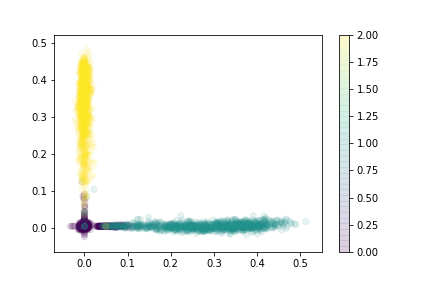}
\caption{Shell based clustering result: x-axis and y-axis value in the plot represents the distance of a data point to the corresponding distinctive shell. From left to right: cat-vs-panda, dog-vs-cat and panda-vs-dog.\label{fig:clustering}}
\end{figure*}

\end{document}